\documentclass{article}

\usepackage[main, final]{arxiv}

\usepackage[utf8]{inputenc}
\usepackage[T1]{fontenc}
\usepackage{hyperref}
\usepackage{url}
\usepackage{booktabs}
\usepackage{amsfonts}
\usepackage{amsmath}
\usepackage{amssymb}
\usepackage{amsthm}
\usepackage{nicefrac}

\theoremstyle{plain}
\newtheorem{lemma}{Lemma}

\theoremstyle{remark}

\usepackage{microtype}
\usepackage{xcolor}
\usepackage{multirow}
\usepackage{makecell}
\usepackage{graphicx}
\usepackage{subcaption}
\usepackage{algorithm}
\usepackage{algorithmic}
\usepackage{tikz}
\usepackage{comment}
\usetikzlibrary{arrows.meta,positioning,calc,decorations.pathmorphing,decorations.markings,shapes.geometric,shapes.symbols,backgrounds,fit}
\title{One Pass Is Not Enough: Recursive Latent Refinement for Generative Models}

\author{%
  Mehdi Esmaeilzadeh\textsuperscript{1} \quad
  Alexia Jolicoeur-Martineau\textsuperscript{2} \quad
  Chirag Vashist\textsuperscript{1} \quad
  Ke Li\textsuperscript{1} \\[0.5em]
  \textsuperscript{1}Simon Fraser University \quad \textsuperscript{2}Independent
}

\begin{document}

\maketitle

\begin{figure}[h]
  \centering
  \includegraphics[width=\linewidth]{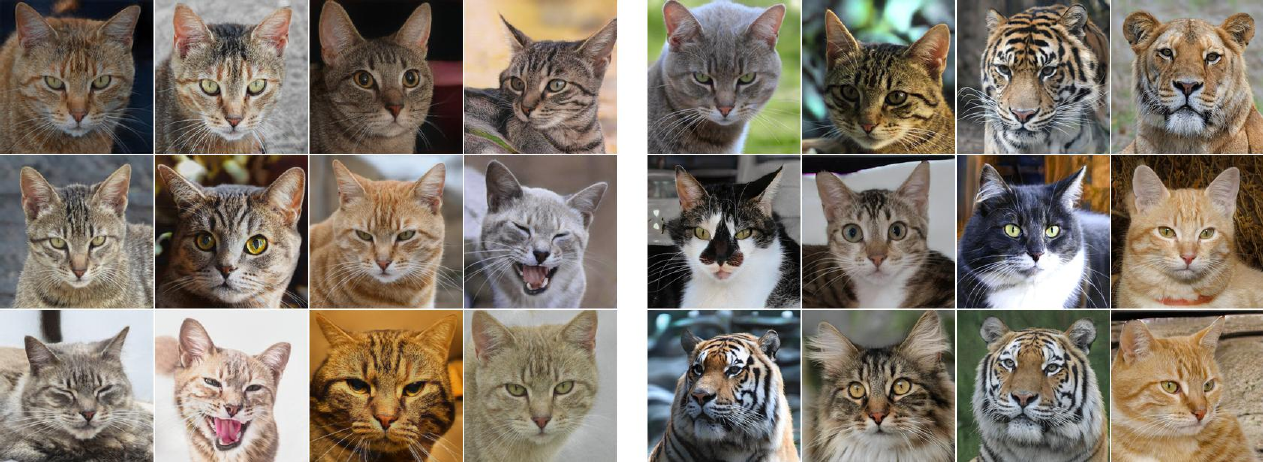}
  \caption{Unconditional AFHQ-v1 ($512{\times}512$) samples from StyleGAN2-ADA without RTM (left) vs.\ with RTM (right). RTM improves both quality (FID $4.79$ vs.\ $4.99$) and diversity (Recall $0.565$ vs.\ $0.507$).}
  \label{fig:hero}
\end{figure}


\begin{abstract}
Despite remarkable progress, image generation is far from solved. The 
dominant metric, FID, conflates sample fidelity with mode coverage and 
is close to being saturated. Yet a model can still exhibit mode collapse 
while achieving a low FID, since a handful of sharp, near-duplicate images can outscore a model that faithfully covers the full data 
distribution. We argue that precision and recall are essential 
complements to FID, and that because FID is already saturated, the more 
meaningful goal is to improve diversity and coverage. Achieving high 
recall requires a model that explicitly prioritizes mode coverage, 
unlike most generative models, which optimize sample fidelity. We 
introduce RTM, which replaces the single-pass latent mapping in 
style-based generators with an iterative refinement process, and show 
that this consistently improves both quality and diversity. Integrated 
with Implicit Maximum Likelihood Estimation (IMLE), which optimizes 
mode coverage by design, RTM achieves the highest precision and recall 
among current state-of-the-art approaches while maintaining competitive 
FID, with improvements across CIFAR-10, CelebA-HQ at $256{\times}256$, 
and nine few-shot benchmarks. RTM also improves StyleGAN2 and 
StyleGAN2-ADA on CIFAR-10 and AFHQ-v1 at $512{\times}512$, 
demonstrating that the benefit is not specific to IMLE. Unlike 
flow-matching baselines that achieve competitive FID at the expense of 
coverage, recursive refinement improves both quality and diversity 
simultaneously.
\end{abstract}

\section{Introduction}
Research on generative models has made remarkable advances and produced a rich family of methods, including diffusion models~\citep{ho2020ddpm,karras2022edm}, flow matching~\citep{lipman2023fm, liu2022flow}, and generative adversarial networks (GANs)~\citep{goodfellow2014gan,karras2020stylegan2}. On image generation tasks, Fréchet inception distance (FID)~\citep{heusel2017fid} has become the standard evaluation metric, and with each successive paper, the state-of-the-art FID has fallen to the low single digits and is close to being saturated. Does this mean that image generation is now solved? 



We argue that image generation is still far from being solved, and the remarkable progress in improving FID has obscured other important challenges. A well-documented limitation of FID~\citep{kynkaanniemi2019pr,naeem2020prdc} is that it conflates sample fidelity with mode coverage into a single scalar, making it impossible to distinguish a model that generates realistic but low-diversity samples from one that faithfully covers the full data distribution. 

A model that produces
a handful of sharp, near-duplicate images per class can therefore attain
a lower FID than one that faithfully covers the long tail. In other words, a model can attain a low FID even if it exhibits \emph{mode collapse}, which is a long-standing problem with generative models where they concentrate on a few high-fidelity modes and drop
others~\citep{salimans2016is,lucic2018gansequal,arora2017gen,goodfellow2016tutorial}.
Distilled and few-step diffusion models show a similar
erosion in the number of modes covered as their step count
shrinks~\citep{salimans2022pd,yin2024dmd,sehwag2022lowdensity}. 


Precision and Recall~\citep{kynkaanniemi2019pr,naeem2020prdc} directly address this by measuring fidelity and coverage independently. Precision measures the fraction of generated samples that are realistic, and Recall measures the fraction of the real data distribution that the generator covers. Unlike FID, where mode collapse can be masked by sharp, concentrated samples, a drop in Recall makes coverage failure immediately visible.

This view informs our goal and choice of methodology. Because FID is already close to being saturated, our goal is not necessarily to push FID even lower. Instead, we aim to improve Precision and Recall, while still maintaining a reasonably low FID. Improving Recall in particular requires a training objective that explicitly targets mode coverage; most generative models optimize sample fidelity instead. Our method is based on Implicit Maximum Likelihood Estimation (IMLE), which satisfies this requirement by construction: it guarantees every training image a nearby generated sample, making mode collapse impossible by design. RTM-IMLE achieves the highest Precision and Recall among current state-of-the-art approaches while maintaining competitive FID.
Figure~\ref{fig:trilemma} places IMLE in the broader
landscape of generative models along the quality, diversity, and speed axes.

\begin{figure}[t]
  \centering
  \scalebox{0.80}{%
  \begin{tikzpicture}[
      font=\small,
      >={Latex[length=2.4mm]},
      axis/.style={font=\bfseries\small, align=center},
      fam/.style={align=center, font=\small\bfseries, inner sep=1.5pt},
      ourstar/.style={star, star points=5, star point ratio=2.3,
                      draw=green!40!black, line width=0.6pt,
                      fill=green!55!black, inner sep=2.2pt},
    ]
    \def\radius{2.55cm}
    \coordinate (Cq) at (0,      1.60);
    \coordinate (Cd) at (-1.75, -1.05);
    \coordinate (Cf) at ( 1.75, -1.05);

    \fill[red!45,   fill opacity=0.35] (Cq) circle[radius=\radius];
    \fill[blue!45,  fill opacity=0.35] (Cd) circle[radius=\radius];
    \fill[orange!60,fill opacity=0.35] (Cf) circle[radius=\radius];
    \draw[red!65!black,   line width=0.9pt] (Cq) circle[radius=\radius];
    \draw[blue!65!black,  line width=0.9pt] (Cd) circle[radius=\radius];
    \draw[orange!70!black,line width=0.9pt] (Cf) circle[radius=\radius];

    \node[axis] at ( 0.00,  4.90) {HIGH QUALITY};
    \node[axis] at (-4.40, -2.45) {MODE COVERAGE\\(diversity)};
    \node[axis] at ( 4.40, -2.45) {FAST SAMPLING\\(1-step inference)};

    \node[fam, text=red!40!black]  at ( 1.60,  0.30) {GANs};
    \node[fam, text=blue!35!black] at (-1.65,  0.30) {Diffusion /\\Flow matching};
    \node[fam, text=orange!40!black] at ( 0.00, -1.60) {IMLE (prior work)};

    \node[ourstar] (ours) at (0, -0.15) {};
    \node[fam, text=green!35!black] at (3.00, 1.65) {Ours\\(RTM-IMLE)};
    \draw[->, line width=0.6pt, green!40!black]
         (2.55, 1.35) -- (ours.north east);

    \draw[->, dashed, line width=0.9pt, green!50!black, opacity=0.90]
        (0.05, -1.25) to[out=85, in=265] (0.05, -0.50);
    \node[font=\scriptsize\bfseries, text=green!45!black,
          fill=white, fill opacity=0.85, inner sep=1pt, text opacity=1]
        at (0.55, -0.85) {+\,RTM};

    \draw[green!45!black, line width=0.4pt, dashed, opacity=0.55]
        (0, -0.15) circle[radius=0.55cm];
  \end{tikzpicture}%
  }
  \caption{ Each circle marks one of quality, diversity, or fast (1-step) sampling; families sit at the intersections. RTM-IMLE pushes IMLE into the triple intersection. Adapted from~\citet{xiao2022ddgan}.}
  \label{fig:trilemma}
\end{figure}

IMLE works by minimizing the distance between each data samples to its nearest generated sample, where the nearness may be defined in raw data space or latent space. 
It resists mode collapse by construction, since every training image is guaranteed a nearby generated image, but it has
historically lagged behind in sample quality. Sample quality is limited not only by the training objective but also by the generator's architecture responsible for
mapping noise to images. IMLE-based methods have generally relied on the StyleGAN family of architectures for their generators~\citep{li2018imle,aghabozorgi2023adaimle,vashist2024rsimle}. The StyleGAN architecture ~\citep{karras2019stylegan,karras2020stylegan2,karras2020ada}
consists of a small mapping network that turns Gaussian noise
 $z \in \mathbb{R}^d$ into a style code
$w \in \mathbb{R}^{d'}$, and a convolutional decoder that conditions on
$w$ via Adaptive Instance Normalization~\citep{huang2017adain} to
progressively upsamples a learned constant feature map into an image. The mapping network is a multilayer perceptron (MLP) network (generally with 8 layers) processed in a single forward pass. 

In every prior IMLE and StyleGAN model, the mapping network is a plain MLP processed in a single forward pass~\citep{karras2019stylegan,karras2020stylegan2}. This forces the mapper to determine every aspect of the style code simultaneously, identity, structure, texture, and fine detail, in one shot. Because the decoder is highly sensitive to small variations in $w$~\citep{karras2019stylegan}, any inaccuracy in $w$ at this stage produces visible artifacts in the final image. A natural response is to make the MLP deeper or wider, but this does not change the fundamental structure: a feed-forward chain still determines $w$ in a single pass, with no mechanism to revise an earlier decision in light of later computation. Our central insight is that refinement constitutes a qualitatively different computation: the mapper produces a coarse estimate of $w$ first and progressively corrects it over multiple cycles, with early cycles establishing coarse structure such as identity, composition, and pose, and later cycles refining texture, sharpening, and color.


To this end, we propose the Recursive Token Mapper (RTM), a drop-in replacement for the single-pass MLP in the StyleGAN architecture. RTM adapts the Tiny Recursive Model of~\citet{jolicoeurmartineau2025trm} to the generative setting, refining latent tokens through nested recursive cycles to gain effective depth through recursion rather than width; the full architecture is described in Section~\ref{sec:generator}.

In summary, our contributions are: \textbf{(i)} the Recursive Token
Mapper, a drop-in recursive replacement for the MLP mapping network
shared by the StyleGAN family of generators; \textbf{(ii)} integrated
with RS-IMLE~\citep{vashist2024rsimle}, RTM improves FID, precision,
and recall over a vanilla RS-IMLE baseline on nine few-shot
benchmarks, unconditional CIFAR-10, and CelebA-HQ at
$256{\times}256$, while retaining IMLE's direct latent-to-image map
and one-step generation; and \textbf{(iii)} integrated with the
adversarially-trained StyleGAN2~\citep{karras2020stylegan2} and
StyleGAN2-ADA~\citep{karras2020ada} generators, RTM lowers FID and
raises precision, density, and coverage on unconditional CIFAR-10 and
AFHQ-v1 relative to the corresponding non-recursive baselines, showing that the
benefit is not confined to the context of IMLE training.

\section{Related Work}
\label{sec:related}

\subsection{Diffusion and flow-matching}
\label{sec:related_diffusion}
The strongest current baselines in terms of FID belong to the diffusion
and flow-matching families: DDPM~\citep{ho2020ddpm},
EDM~\citep{karras2022edm}, LSGM~\citep{vahdat2021lsgm}, Flow Matching
(FM)~\citep{lipman2023fm}, OT Flow
Matching~\citep{tong2023otcfm}, Mean
Flows~\citep{geng2025meanflow}, and Inductive Moment
Matching~\citep{zhou2025imm}. These methods learn a time-dependent
vector field $u_\theta(x,t)$ that defines an ordinary or stochastic
differential equation taking Gaussian noise at $t{=}0$ to a data
sample at $t{=}1$. 

Generation from a noise code $z$ is therefore not
a function $G_\theta(z)$, but the trajectory of an iterative solver
initialised at $z$, which has three consequences. First, the
correspondence between a noise code and an image is only defined
implicitly through the solver; the training objective does not pull a
particular latent towards a specific training image, but matches
expected vector fields. Second, reaching a low-FID sample typically
requires tens to hundreds of neural function evaluations; distilled
one-step
variants~\citep{salimans2022pd,yin2024dmd,song2023consistency,geng2025meanflow,zhou2025imm}
accelerate inference but consistently trade away coverage in the
process: \citet{yin2024dmd} explicitly observe a drop in sample
diversity when collapsing a multi-step diffusion teacher into a
single-step student, and \citet{salimans2022pd} report a similar
quality--diversity erosion as the number of sampling steps shrinks.
Third, even at full step count diffusion solvers systematically
under-cover low-density regions of the data
distribution~\citep{sehwag2022lowdensity}.

\subsection{Generative adversarial networks}
\label{sec:related_gan}
GANs~\citep{goodfellow2014gan,karras2019stylegan,karras2020stylegan2,karras2020ada,sauer2022styleganxl}
sit at the opposite end of the same trade-off: they generate in a
single forward pass, but a long line of work, starting
with~\citet{salimans2016is,goodfellow2016tutorial,arora2017gen} and
quantified at scale by~\citet{lucic2018gansequal}, has established
that adversarial training is prone to mode collapse, where the generator concentrates on a few high-fidelity modes and silently drops
others. Considerable effort has gone into mitigating this failure mode,
including training stabilizers and
regularisers~\citep{salimans2016is,karras2020stylegan2}, data
augmentation strategies~\citep{karras2020ada}, and hybrid GAN/diffusion
objectives~\citep{xiao2022ddgan}, but the failure mode remains the
central concern with adversarial training. Style-based GAN
families~\citep{karras2019stylegan,karras2020stylegan2,karras2020ada}
introduced the two-component design of a mapping network followed by a
convolutional decoder that our RTM builds on.

\subsection{IMLE-family generators}
\label{sec:related_imle}
Implicit Maximum Likelihood Estimation (IMLE)~\citep{li2018imle} is a
direct-mapping alternative that trains a generator $G_\theta(z)$ with
the explicit guarantee that every training image $x_i$ has some latent
$z_i$ whose generated image $G_\theta(z_i)$ is close to it in a
learned feature space. A pool of random latents is sampled, and the nearest
generated sample to each $x_i$ is selected, and the generator is
pulled towards the matched training image. This one-to-one assignment is what makes IMLE robust to the mode-collapse failure modes documented
for adversarial~\citep{lucic2018gansequal} and
distilled-diffusion~\citep{yin2024dmd,salimans2022pd} generators: every real image is guaranteed a nearby preimage by construction.
Inference is a single forward pass, and the latent-to-image map is an
ordinary neural network. Adaptive IMLE
(AdaIMLE)~\citep{aghabozorgi2023adaimle} introduced adaptive per-image
thresholds during training, and the more recent Rejection-Sampling IMLE
(RS-IMLE)~\citep{vashist2024rsimle} closes the train and test prior
gap by rejecting pool latents whose generated images are too close to
existing training images. Across this entire line of work, the mapping network has remained an eight-layer MLP inherited from StyleGAN.

\subsection{Recursive and iterative architectures}
\label{sec:related_recursive}
Recursive computation has a long history in deep learning, from
RNN-style weight-tying to Universal
Transformers~\citep{dehghani2019universal} and recent
looped-transformer analyses~\citep{giannou2023looped}. Closer to our
setting, the Hierarchical Reasoning Model
(HRM)~\citep{wang2025hrm} and its compact successor, the Tiny Recursive
Model (TRM)~\citep{jolicoeurmartineau2025trm} introduce nested
$H{\times}L$ recursive cycles around a single shared block, paired
with deep supervision and a learned halting head, on discrete reasoning
benchmarks. We adapt this recursive architecture to the generative
setting as a mapping network for image generation.

\section{Method}
\label{sec:method}

\subsection{Background: IMLE and RS-IMLE}
\label{sec:imle}

IMLE~\citep{li2018imle} trains a generator $G_\theta$ on a dataset $\{x_1,\dots,x_n\}$ by guaranteeing that every training image is paired with a noise vector that maps near it. At each round, a pool of $m \gg n$ candidate latents $\{\tilde z_j\}_{j=1}^m$ is drawn from a Gaussian prior $p(z)$, decoded into images $G_\theta(\tilde z_j)$, and matched to training images by a nearest-neighbour search in a learned feature space $\phi$. The matched latents are then used as training inputs:
\begin{equation}
  \sigma(i) = \arg\min_{j \in \{1,\dots,m\}} \big\| \phi(x_i) - \phi(G_\theta(\tilde z_j)) \big\|_2,
  \qquad
  \min_\theta \; \sum_{i=1}^{n} \mathcal{L}\!\left(G_\theta(\tilde z_{\sigma(i)}),\, x_i\right),
  \label{eq:imle}
\end{equation}
where $\mathcal{L}$ combines an LPIPS~\citep{zhang2018lpips} perceptual term and a pixel-level reconstruction term. Pairing every training image with its own latent makes mode collapse impossible by construction, and inference is a single forward pass.

RS-IMLE~\citep{vashist2024rsimle} closes the gap between the matched-latent training distribution and the i.i.d.\ Gaussian inference distribution by rejecting any pool sample whose generated image is closer than a threshold $\varepsilon$ to some training image, so the generator only learns from latents that look like genuine prior draws.

\subsection{Improved mapping network: the Recursive Token Mapper (RTM)}
\label{sec:generator}

Following StyleGAN~\citep{karras2019stylegan}, the IMLE generator factors into two components. A small mapping network maps a noise vector $z \sim \mathcal{N}(0, I_d)$ to a style vector $w \in \mathbb{R}^d$, and a convolutional decoder conditions on $w$ via Adaptive Instance Normalization~\citep{huang2017adain} and progressively upsamples a learned constant feature map into an image. The decoder architecture follows each baseline: residual blocks with $1{\times}1$ and $3{\times}3$ convolutions, GELU activations and noise injection for the few-shot runs; ConvNeXt-style blocks~\citep{liu2022convnext} for CIFAR-10 at $32 \times 32$ and CelebA-HQ at $256{\times}256$; and the StyleGAN2 / StyleGAN2-ADA~\citep{karras2020stylegan2,karras2020ada} convolutional decoder for the adversarial-training experiments.

The mapping network is the single component we change. In all prior IMLE work, and in the StyleGAN family the architecture is borrowed from, this network is an MLP processed in a single forward pass: eight layers in the RS-IMLE experiments and two layers in the StyleGAN2 experiments. The decoder is highly sensitive to small variations in $w$, so the mapper's placement of $w$ is a sample-quality bottleneck. We replace it with the Recursive Token Mapper (RTM) introduced in the next subsection, which adapts the Tiny Recursive Model of~\citet{jolicoeurmartineau2025trm} to the generative setting.

\begin{figure}[t]
  \centering
  \begin{tikzpicture}[
      font=\small,
      >={Latex[length=2.0mm]},
      labelbig/.style={font=\bfseries, align=center},
      box/.style={draw, rounded corners=2pt, minimum width=2.4cm,
                  minimum height=0.50cm, align=center, line width=0.55pt,
                  font=\footnotesize, inner ysep=2pt},
      fc/.style={box, fill=blue!10, draw=blue!60!black},
      norm/.style={box, fill=gray!15, draw=gray!60!black, font=\footnotesize},
      proj/.style={box, fill=violet!12, draw=violet!60!black, font=\footnotesize},
      grid/.style={draw, rounded corners=1pt, fill=orange!18,
                   draw=orange!70!black, minimum size=0.52cm,
                   inner sep=0pt, font=\tiny},
      hlbox/.style={draw, dashed, rounded corners=3pt, draw=green!45!black,
                    fill=green!6, line width=0.7pt, inner sep=5pt},
      flbox/.style={draw, rounded corners=2pt, line width=0.55pt,
                    align=center, font=\scriptsize, inner ysep=2pt,
                    minimum height=0.85cm},
      outc/.style={draw, cloud, cloud puffs=11, cloud puff arc=120,
                   aspect=2.2, line width=0.5pt,
                   fill=yellow!12, draw=black!55,
                   minimum width=1.55cm, minimum height=0.80cm,
                   font=\footnotesize, inner sep=1pt},
      arrow/.style={->, line width=0.6pt, black!75},
      narrow/.style={->, line width=0.7pt, black!80},
    ]

    \begin{scope}[local bounding box=LEFT]
      \node[labelbig] at (0, 7.40) {Baseline StyleGAN mapper};

      \def\zTopL{6.85}
      \def\pnYL{6.00}
      \def\fcStart{5.05}
      \def\fcStep{0.95}

      \node (zL)       at (0, \zTopL) {$z \in \mathbb{R}^{d}$};
      \node[norm] (pnL) at (0, \pnYL) {PixelNorm};
      \draw[arrow] (zL) -- (pnL);

      \foreach \i [count=\y from 0] in {1,...,8} {
        \node[fc] (fc\i) at (0, {\fcStart - \y*\fcStep}) {FC$_{\i}$ + LReLU};
      }
      \draw[arrow] (pnL) -- (fc1);
      \foreach \i [count=\j from 2] in {1,2,3,4,5,6,7} {
        \draw[arrow] (fc\i) -- (fc\j);
      }

      \node[outc] (wL) at (0, {\fcStart - 8*\fcStep - 0.75})
            {$w \in \mathbb{R}^{d}$};
      \draw[arrow] (fc8) -- (wL);
    \end{scope}

    \begin{scope}[xshift=6.10cm, local bounding box=RIGHT]
      \node[labelbig] at (0, 7.40) {Our Recursive Token Mapper (RTM)};

      \def\zTopR{6.85}
      \def\pnYR{6.00}
      \def\projY{5.05}
      \def\tokY{4.15}
      \def\hlY{1.05}
      \def\hlH{4.65}
      \def\hlW{6.40}
      \def\readY{-2.05}
      \def\wY{-3.30}

      \node (zR)        at (0, \zTopR) {$z \in \mathbb{R}^{d}$};
      \node[norm] (pnR) at (0, \pnYR)  {PixelNorm};
      \node[proj] (p2s) at (0, \projY) {$z \mapsto$ tokens};
      \draw[arrow] (zR)  -- (pnR);
      \draw[arrow] (pnR) -- (p2s);

      \foreach \i/\x in {0/-0.56, 1/0.0, 2/0.56} {
        \node[grid] (tk\i) at (\x, \tokY) {};
      }
      \node[font=\footnotesize, align=center, text=black!70]
          at (1.40, \tokY) {$Z_0$};
      \draw[arrow] (p2s) -- (tk1);

      \node[hlbox, minimum width=\hlW cm, minimum height=\hlH cm]
          (hl) at (0, \hlY) {};

      \draw[arrow] (tk1.south) -- (hl.north);

      \def\blkW{2.55}
      \def\blkHt{3.20}
      \def\subW{2.30}
      \pgfmathsetmacro{\blkWest}{-\blkW/2}
      \pgfmathsetmacro{\blkEast}{ \blkW/2}
      \pgfmathsetmacro{\blkTop}{\hlY + \blkHt/2}
      \pgfmathsetmacro{\blkBot}{\hlY - \blkHt/2}

      \node[draw, rounded corners=2.5pt, line width=0.65pt,
            fill=green!7, draw=green!50!black,
            minimum width=\blkW cm, minimum height=\blkHt cm]
          (blk) at (0, \hlY) {};

      \node[draw, rounded corners=1.6pt, line width=0.5pt,
            fill=green!22, draw=green!55!black,
            minimum width=\subW cm, minimum height=0.62cm,
            align=center, font=\scriptsize\itshape, text=black, inner ysep=2pt]
          (sl1) at (0, \hlY + 1.20)
          {Token-mix MLP\\
           {\tiny\upshape (or Self-Attn)}};
      \node[draw, rounded corners=1.6pt, line width=0.5pt,
            fill=orange!25, draw=orange!65!black,
            minimum width=\subW cm, minimum height=0.40cm,
            align=center, font=\scriptsize, text=black]
          (an1) at (0, \hlY + 0.29) {Add\,$\oplus$\,RMSNorm};
      \node[draw, rounded corners=1.6pt, line width=0.5pt,
            fill=green!22, draw=green!55!black,
            minimum width=\subW cm, minimum height=0.40cm,
            align=center, font=\scriptsize\itshape, text=black]
          (sl2) at (0, \hlY - 0.51) {Channel-mix MLP};
      \node[draw, rounded corners=1.6pt, line width=0.5pt,
            fill=orange!25, draw=orange!65!black,
            minimum width=\subW cm, minimum height=0.40cm,
            align=center, font=\scriptsize, text=black]
          (an2) at (0, \hlY - 1.31) {Add\,$\oplus$\,RMSNorm};

      \draw[->, black, line width=0.5pt, >={Latex[length=1.6mm,width=1.4mm]}]
          (sl1.south) -- (an1.north);
      \draw[->, black, line width=0.5pt, >={Latex[length=1.6mm,width=1.4mm]}]
          (an1.south) -- (sl2.north);
      \draw[->, black, line width=0.5pt, >={Latex[length=1.6mm,width=1.4mm]}]
          (sl2.south) -- (an2.north);

      \def\BGap{0.20}
      \def\RGap{0.55}
      \pgfmathsetmacro{\BlueLx}{\blkWest - \BGap}
      \pgfmathsetmacro{\BlueRx}{\blkEast + \BGap}
      \pgfmathsetmacro{\BlueBy}{\blkBot - \BGap}
      \pgfmathsetmacro{\BlueTy}{\blkTop + \BGap}
      \pgfmathsetmacro{\RedLx}{\blkWest - \RGap}
      \pgfmathsetmacro{\RedRx}{\blkEast + \RGap}
      \pgfmathsetmacro{\RedBy}{\blkBot - \RGap}
      \pgfmathsetmacro{\RedTy}{\blkTop + \RGap}

      \draw[blue!72!black, line width=1.1pt, rounded corners=5pt,
            decoration={markings,
              mark=at position 0.85 with
                {\arrow[scale=1.6,line width=0.9pt]{Latex}}},
            postaction={decorate}]
          (\BlueRx, \BlueTy) -- (\BlueLx, \BlueTy)
          -- (\BlueLx, \BlueBy) -- (\BlueRx, \BlueBy) -- cycle;

      \draw[red!72!black, line width=1.1pt, rounded corners=6pt,
            decoration={markings,
              mark=at position 0.85 with
                {\arrow[scale=1.6,line width=0.9pt]{Latex}}},
            postaction={decorate}]
          (\RedRx, \RedTy) -- (\RedLx, \RedTy)
          -- (\RedLx, \RedBy) -- (\RedRx, \RedBy) -- cycle;

      \node[font=\scriptsize, fill=white, inner sep=2pt,
            align=center, text=black]
          at (-0.55, \BlueTy)
          {{\bfseries\color{blue!72!black}$\boldsymbol{\times L}$}\,inner ($f_L$)};
      \node[font=\scriptsize, fill=white, inner sep=2pt,
            align=center, text=black]
          at (+0.55, \RedTy)
          {{\bfseries\color{red!72!black}$\boldsymbol{\times H}$}};


      \node[proj] (s2w) at (0, \readY) {tokens $\mapsto w$};
      \draw[arrow] (hl.south) -- (s2w);
      \node[outc] (wR) at (0, \wY) {$w \in \mathbb{R}^{d}$};
      \draw[arrow] (s2w) -- (wR);
    \end{scope}

    \begin{scope}[on background layer]
      \node[draw, dotted, line width=0.7pt, gray!75,
            rounded corners=4pt, inner sep=10pt,
            fit=(LEFT)(RIGHT)] (BOTH_BOX) {};
    \end{scope}

    \draw[line width=0.8pt, black!55]
        let \p1 = ($(LEFT.east)!0.5!(RIGHT.west)$) in
        (\x1, {-3.90}) -- (\x1, {7.55});

  \end{tikzpicture}
  \caption{StyleGAN mapper~\citep{karras2019stylegan} (left) vs.\ our RTM (right): a shared block iterated $L$ times ({\color{blue!72!black}\textbf{blue}}) repeated for $H$ refinement steps ({\color{red!72!black}\textbf{red}}). The IMLE loss is applied only to the final style $w$; no supervision is applied between refinement steps.}
  \label{fig:mapper}
\end{figure}

RTM produces the same style vector $w \in \mathbb{R}^d$ from the same noise vector $z \in \mathbb{R}^d$ as the MLP mapper it replaces, but it does so by repeatedly applying a single small block $f$ to a refined latent representation rather than passing $z$ through a deep stack of independent layers. The depth of computation is controlled by two integers: $L$ inner cycles and $H$ refinement steps. Because $f$ is reused at every iteration, increasing either count increases the effective depth of the mapper without adding parameters. Following~\citet{jolicoeurmartineau2025trm}, the two-level structure separates a fast-adapting inner state $Z_L$ (updated $L$ times per step) from a slower-accumulating outer state $Z_H$ (updated once per step), while re-injecting $Z_0$ at every inner cycle keeps the recursion anchored to the original noise.

\paragraph{Recursive H/L cycles.} After PixelNorm, $z$ is linearly lifted into a sequence of latent tokens $Z_0$, which seeds two states: an inner state $Z_L$ and an outer state $Z_H$, both initialized from fixed vectors. Within one step, $Z_L$ is updated $L$ times by $f$ conditioned on the current outer state and on $Z_0$; the refreshed $Z_L$ then drives a single update of $Z_H$ by $f$. This is repeated for $H$ steps in total, re-injecting $Z_0$ at every inner cycle so the recursion never loses contact with the original noise. The final $Z_H$ is read out by a linear layer to produce $w$. Algorithm~\ref{alg:rtm} in the appendix gives the full step-by-step pseudo-code, including the projection and readout layers.

\paragraph{Choice of block.} The shared block $f$ enables information exchange between latent tokens and is otherwise unconstrained. We use the MLP-Mixer-style token-mixing block of~\citet{tolstikhin2021mlpmixer} throughout this work: layer normalization, a SwiGLU MLP along the sequence axis to mix tokens, and a second SwiGLU MLP along the channel axis. This avoids the quadratic cost of attention while still permitting cross-token communication. To validate this design choice, we also ran the original TRM block of~\citet{jolicoeurmartineau2025trm}, which replaces token mixing with multi-head self-attention on the token grid, on the few-shot benchmarks (Section~\ref{sec:fewshot}). The MLP-Mixer variant was consistently better in FID and noticeably faster per step, and its advantage in both quality and wall-clock grows with sequence length and dataset size; we therefore use MLP-Mixer for all of the larger CIFAR-10, CelebA-HQ, AFHQ-v1, and StyleGAN runs.

\paragraph{Short-gradient optimization.} Recursing through $H$ steps would otherwise multiply the activation memory of the mapper by $H$, since the full computation graph of every step would need to be retained for backpropagation. To keep training memory tractable, only the final step is differentiated through; all earlier steps run without tracking gradients, so their intermediate activations are immediately discarded. This preserves the representational benefit of deep recursion while keeping the per-step training memory cost close to that of a single feed-forward block, analogous to truncated backpropagation through time.

\section{Experiments}

\paragraph{Experimental setup.} We evaluate RTM in two training regimes. In the IMLE regime (Sections~\ref{sec:cifar10}--\ref{sec:celebahq}), we integrate RTM as the mapping network of RS-IMLE and evaluate on unconditional CIFAR-10 at $32{\times}32$ and unconditional CelebA-HQ at $256{\times}256$. In the adversarial regime (Section~\ref{sec:stylegan}), we integrate RTM as the mapping network of StyleGAN2 and StyleGAN2-ADA and evaluate on unconditional CIFAR-10 at $32{\times}32$ and unconditional AFHQ-v1 at $512{\times}512$. Results on nine standard few-shot benchmarks are deferred to Appendix~\ref{sec:fewshot}. The decoder, regularizer, optimizer, and training schedule are identical between baseline and RTM runs in every comparison; only the mapping network changes. Per-dataset hyperparameters are in Appendix~\ref{app:training}.

\paragraph{Evaluation.} We use FID~\citep{heusel2017fid}, Inception Score~\citep{salimans2016is}, and the $k{=}3$ nearest-neighbour Precision and Recall of~\citet{kynkaanniemi2019pr} for a comprehensive evaluation; for the StyleGAN runs we additionally report Density and Coverage~\citep{naeem2020prdc}. FID measures the distance between Inception-v3 features of generated and real images; IS and Precision measure sample fidelity, Recall measures diversity, and Density and Coverage are kNN-based refinements of Precision and Recall. We compute all four metrics from the same Inception-v3 activations of 50{,}000, 30{,}000, and 5{,}000 generated samples for CIFAR-10, CelebA-HQ, and the few-shot benchmarks respectively, with the corresponding training set as the reference; for AFHQ-v1 we use the 14{,}630-image test split as the reference. Every reproduced row in our tables is evaluated by us from the corresponding method's official released checkpoint, so that all rows in a given table are comparable; rows marked with~$\dagger$ are taken from the original publication.

\subsection{Unconditional CIFAR-10}
\label{sec:cifar10}

\paragraph{Setup.} We evaluate on CIFAR-10~\citep{krizhevsky2009cifar} at $32{\times}32$. The matched comparison is RS-IMLE with the same decoder; Table~\ref{tab:cifar10} additionally covers representative GAN, diffusion / score-based / consistency, and flow-matching baselines (citations in the table).

\begin{table}[t]
  \caption{Unconditional CIFAR-10 ($32{\times}32$, 50{,}000 samples). Time / image is the wall-clock per generated image at batched inference on a single H100 (lower is faster); see App.~\ref{app:timing} for the protocol. \textbf{Bold}/\underline{underline}: best/second-best per metric column. $\dagger$~taken from the original publication.}
  \label{tab:cifar10}
  \centering
  \small
  \setlength{\tabcolsep}{3pt}
  \begin{tabular}{lccccc}
    \toprule
    Method & Precision $\uparrow$ & Recall $\uparrow$ & IS $\uparrow$ & FID $\downarrow$ & Time (ms) $\downarrow$ \\
    \midrule
    \multicolumn{6}{l}{\emph{GAN family (1-NFE, adversarial)}} \\
    StyleGAN2$^\dagger$~\citep{karras2020ada}                  & --                & --                & 9.21              & 8.32              & -- \\
    StyleGAN-XL~\citep{sauer2022styleganxl}                          & 0.674             & 0.467             & --                & \textbf{1.86}     & 3.6 \\
    \midrule
    \multicolumn{6}{l}{\emph{Diffusion / score-based / consistency family (multi-NFE unless noted)}} \\
    DDPM~\citep{ho2020ddpm}                                          & 0.619             & 0.567             & 8.62              & 11.18             & 13.3 \\
    NVAE+VAEBM$^\dagger$~\citep{xiao2021vaebm}                       & --                & --                & 8.43              & 12.19             & --   \\
    LSGM~\citep{vahdat2021lsgm}                                      & 0.703             & 0.596             & 10.07             & 2.79              & 570   \\
    ProxDM (hybrid)~\citep{fang2025proxdm}                           & 0.658             & 0.570             & 8.98              & 4.55              & 185   \\
    RGM (KLD-D, 4 steps)~\citep{choi2023rgm}                         & 0.653             & 0.526             & 9.28              & 4.85              & 1.4   \\
    EDM~\citep{karras2022edm}                                        & 0.675             & 0.618             & 9.91              & \underline{1.96}  & 14.6 \\
    Consistency Models (CD, 1-NFE)~\citep{song2023consistency}       & 0.688             & 0.560             & 9.72              & 3.57              & 1.8 \\
    Consistency Models (CT, 1-NFE)~\citep{song2023consistency}       & 0.702             & 0.419             & 8.61              & 8.79              & 1.5 \\
    \midrule
    \multicolumn{6}{l}{\emph{Flow-matching family (multi-NFE, no direct latent-to-image map)}} \\
    Flow Matching~\citep{lipman2023fm}                               & 0.651             & 0.589             & 9.28              & 3.72              & 23.8 \\
    OT-CFM~\citep{tong2023otcfm}                                     & 0.652             & 0.592             & 9.25              & 3.68              & 21.1 \\
    Mean Flows ($N{=}1$)~\citep{geng2025meanflow}                    & 0.687             & 0.586             & \underline{10.10} & 2.87              & 1.3 \\
    Mean Flows ($N{=}2$)~\citep{geng2025meanflow}                    & 0.704             & 0.582             & \textbf{10.13}    & 2.83              & 1.8 \\
    Inductive Moment Matching ($N{=}1$)~\citep{zhou2025imm}          & 0.659             & 0.593             & \underline{10.10} & 3.16              & 1.3 \\
    Inductive Moment Matching ($N{=}2$)~\citep{zhou2025imm}          & 0.674             & 0.615             & 10.08             & 2.01              & 1.7 \\
    \midrule
    \multicolumn{6}{l}{\emph{IMLE family (1-NFE, direct latent-to-image map)}} \\
    RS-IMLE Baseline                                      & \underline{0.853} & \underline{0.738} & 10.00             & 5.69              & 6.4 \\
    RS-IMLE + RTM $(H{=}16,L{=}1)$ \textbf{(Ours)}              & \textbf{0.896}    & \textbf{0.773}    & 10.08             & 3.97              & 6.7 \\
    \bottomrule
  \end{tabular}
\end{table}

\paragraph{Results.} Within the IMLE family, RTM lowers FID by 30\% over the matched RS-IMLE baseline while simultaneously improving Precision, Recall, and IS. Across families, RTM closes most of the remaining FID gap to the strongest diffusion and flow-matching baselines while retaining IMLE's one-step inference. The defining feature of our method is its position on the Precision/Recall axes: RTM achieves the highest Precision and the highest Recall of any method in Table~\ref{tab:cifar10}, despite being trained without an adversary or a diffusion solver. By contrast, flow-matching methods such as Mean Flows achieve a lower FID ($2.83$) but substantially lower Precision ($0.704$) and Recall ($0.582$), a pattern consistent with mode collapse: concentrating on high-density modes can improve FID while leaving large portions of the data distribution uncovered. This illustrates precisely why FID alone is an insufficient criterion for evaluating generative models.

\subsection{Unconditional CelebA-HQ at {\boldmath$256{\times}256$}}
\label{sec:celebahq}

\paragraph{Setup.} CelebA-HQ~\citep{karras2018progan} at $256{\times}256$, same RS-IMLE pipeline and decoder as described in Appendix~\ref{app:decoders}; only the mapping network changes. We compare against the matched RS-IMLE baseline and the strongest publicly-checkpointed unconditional CelebA-HQ generators (DDPM, DDGAN, RDM, StyleSwin); citations are in Table~\ref{tab:celebahq}. NVAE and VAEBM rows are taken verbatim from~\citet{vahdat2021lsgm}.

\begin{table}[t]
  \caption{Unconditional CelebA-HQ ($256{\times}256$, 30{,}000 samples). Time / image as in Table~\ref{tab:cifar10} (single H100, batched). \textbf{Bold}/\underline{underline}: best/second-best per metric column. Rows without $\dagger$ are our evaluations of each method's released checkpoint. RS-IMLE + RTM uses the $(H{=}16,L{=}2)$ configuration. }
  \label{tab:celebahq}
  \centering
  \small
  \begin{tabular}{lccccc}
    \toprule
    Method & Precision $\uparrow$ & Recall $\uparrow$ & IS $\uparrow$ & FID $\downarrow$ & Time (ms) $\downarrow$ \\
    \midrule
    \multicolumn{6}{l}{\emph{VAE / score-based family}} \\
    NVAE$^\dagger$~\citep{vahdat2020nvae}                            & --                & --                & --                & 29.76             & --    \\
    VAEBM$^\dagger$~\citep{xiao2021vaebm}                            & --                & --                & --                & 20.38             & --    \\
    LSGM~\citep{vahdat2021lsgm}                                      & 0.761             & 0.337             & 2.61              & 11.69             & 1{,}070    \\
    \midrule
    \multicolumn{6}{l}{\emph{Diffusion / score-based family}} \\
    DDPM~\citep{ho2020ddpm}                                          & 0.447             & 0.147             & 2.44              & 33.49             & 577   \\
    DDGAN~\citep{xiao2022ddgan}                                      & 0.683             & 0.205             & 2.71              & 15.83             & 8.8    \\
    ProxDM (hybrid)~\citep{fang2025proxdm}                           & 0.634             & 0.217             & 2.76              & 19.94             & 365   \\
    RDM~\citep{teng2024rdm}                                          & 0.709             & \underline{0.495}             & 3.29              & \textbf{5.77}     & 1{,}713 \\
    \midrule
    \multicolumn{6}{l}{\emph{GAN family}} \\
    StyleSwin~\citep{zhang2022styleswin}                             & 0.626             & 0.362             & \textbf{3.46}     & \underline{7.99}  & 39.6  \\
    \midrule
    \multicolumn{6}{l}{\emph{IMLE family (1-NFE, direct latent-to-image map)}} \\
    RS-IMLE Baseline                                     & \underline{0.924} & 0.491             & 3.18              & 15.43             & 18.1 \\
    RS-IMLE + RTM $(H{=}16, L{=}2)$ \textbf{(Ours)}          & \textbf{0.952}    & \textbf{0.592}    & \underline{3.36}  & 10.67             & 19.6 \\
    \bottomrule
  \end{tabular}
\end{table}

\paragraph{Results.} Within the IMLE family, RTM cuts FID by 27\% over the matched RS-IMLE baseline while simultaneously improving Precision, Recall, and IS. RDM is the strongest non-IMLE method by FID, but RTM achieves both the highest Precision and the highest Recall in the table while generating each sample in a single feed-forward pass rather than a multi-step diffusion solver. The remaining FID gap mirrors the trade-off observed on CIFAR-10 (Table~\ref{tab:cifar10}) and is the cost of insisting on a direct latent-to-image map.

\subsection{RTM as the mapper of a StyleGAN2 generator}
\label{sec:stylegan}

\paragraph{Setup.} To test whether the benefit of recursive mapping is specific to IMLE training, we plug RTM into the mapping network of StyleGAN2 and StyleGAN2-ADA~\citep{karras2020stylegan2,karras2020ada} and train them with the reproductions of those recipes from~\citet{kang2021ReACGAN}. We evaluate two settings: unconditional CIFAR-10 at $32{\times}32$ with StyleGAN2 and StyleGAN2-ADA, and unconditional AFHQ-v1 at $512{\times}512$ with StyleGAN2-ADA. The decoder, regularizer, augmentation pipeline, optimizer, and training schedule are identical between baseline and RTM runs; only the two-layer MLP mapper is replaced by an RTM with $(H,L){=}(16,1)$. All numbers in Table~\ref{tab:stylegan} are reported with the StudioGAN evaluation pipeline of~\citet{kang2021ReACGAN}, using Improved Precision/Recall~\citep{kynkaanniemi2019pr} and Density/Coverage~\citep{naeem2020prdc}; the FID and Precision/Recall implementations differ slightly in feature extractor, reference statistics, and image preprocessing from those used in Tables~\ref{tab:cifar10} and~\ref{tab:celebahq}.

\begin{table}[t]
  \caption{RTM as the StyleGAN2 / StyleGAN2-ADA mapper. Each (no RTM, +RTM) pair shares the same training pipeline; only the mapping network changes. Bold marks the better entry in each pair.}
  \label{tab:stylegan}
  \centering
  \small
  \setlength{\tabcolsep}{4pt}
  \begin{tabular}{llcccccc}
    \toprule
    Dataset & Method & FID $\downarrow$ & IS $\uparrow$ & Prec.\ $\uparrow$ & Rec.\ $\uparrow$ & Dens.\ $\uparrow$ & Cov.\ $\uparrow$ \\
    \midrule
    \multirow{4}{*}{\shortstack{CIFAR-10 \\($32{\times}32$)}}
      & StyleGAN2 (no RTM)                       & 3.88          & 10.20          & 0.734          & \textbf{0.664} & 0.987          & 0.894 \\
      & StyleGAN2 + RTM \textbf{(Ours)}           & \textbf{3.55} & \textbf{10.21} & \textbf{0.740} & 0.661          & \textbf{1.017} & \textbf{0.901} \\[2pt]
      & StyleGAN2-ADA (no RTM)                   & \textbf{2.31} & 10.46          & 0.744          & \textbf{0.685} & 1.050          & 0.932 \\
      & StyleGAN2-ADA + RTM \textbf{(Ours)}       & \textbf{2.31} & \textbf{10.50} & \textbf{0.754} & 0.669          & \textbf{1.063} & \textbf{0.933} \\
    \midrule
    \multirow{2}{*}{\shortstack{AFHQ-v1 \\($512{\times}512$)}}
      & StyleGAN2-ADA (no RTM)                   & 4.99          & \textbf{12.91} & 0.857          & 0.507          & \textbf{1.282} & \textbf{0.835} \\
      & StyleGAN2-ADA + RTM \textbf{(Ours)}       & \textbf{4.79} & 12.49          & \textbf{0.859} & \textbf{0.565} & 1.236          & 0.833 \\
    \bottomrule
  \end{tabular}
\end{table}

\paragraph{Results.} On CIFAR-10 with StyleGAN2, RTM lowers FID from the StudioGAN-reported $3.88$ to $3.55$ and improves IS, Precision, Density, and Coverage; Recall is essentially unchanged. With ADA augmentation, the RTM-mapped chain matches the StudioGAN baseline's FID of $2.31$ while improving IS, Precision, Density, and Coverage. On AFHQ-v1 with StyleGAN2-ADA, RTM lowers FID from $4.99$ to $4.79$ and increases Recall from $0.507$ to $0.565$; the baseline retains a small edge on IS, Density, and Coverage. Because the only difference between the paired rows is the mapping network, these gains show that the benefit of recursive mapping is not specific to IMLE training and transfers to an adversarial recipe.

\subsection{Analysis}

\paragraph{Varying the number of refinement steps at inference.} Because the IMLE loss is applied only to the final style $w$ and $H$ acts as a computational hyperparameter (Section~\ref{sec:generator}) rather than as a per-step training signal, the number of refinement steps used at inference can differ from the value used during training without any retraining or fine-tuning. After training the $(H{=}16, L{=}1)$ configuration on CIFAR-10 and CelebA-HQ, we re-evaluate the trained model at $H \in \{8, 16, 32, 64\}$, leaving every other component of the architecture and evaluation pipeline unchanged.

\begin{table}[t]
  \caption{Effect of varying the number of refinement steps $H$ at inference for the trained $(H{=}16, L{=}1)$ configuration. \textbf{Bold}: best per dataset.}
  \label{tab:halt_sweep}
  \centering
  \small
  \begin{tabular}{llcccc}
    \toprule
   Dataset & Inference-time $H$ & FID $\downarrow$ & IS $\uparrow$ & Precision $\uparrow$ & Recall $\uparrow$ \\
    \midrule
    \multirow{4}{*}{CIFAR-10}
      & $H{=}8$  (half of native)        & 4.04           & 10.07          & 0.896          & 0.774          \\
      & $H{=}16$ (native)                & 3.97           & 10.08          & 0.896          & 0.773          \\
      & $H{=}32$ (double)                & \textbf{3.94}  & \textbf{10.09} & \textbf{0.897} & \textbf{0.775} \\
      & $H{=}64$ (quadruple)             & \textbf{3.94}  & 10.08          & 0.895          & 0.774          \\
    \midrule
    \multirow{4}{*}{CelebA-HQ}
      & $H{=}8$  (half of native)        & 12.31          & 3.29           & \textbf{0.941} & 0.537          \\
      & $H{=}16$ (native)                & 12.22          & \textbf{3.30}  & 0.940          & \textbf{0.541} \\
      & $H{=}32$ (double)                & \textbf{12.19} & \textbf{3.30}  & 0.939          & \textbf{0.541} \\
      & $H{=}64$ (quadruple)             & 12.20          & \textbf{3.30}  & 0.939          & \textbf{0.541} \\
    \bottomrule
  \end{tabular}
\end{table}

On both datasets, FID improves modestly as $H$ increases from the trained value of $16$ up to $32$ and then plateaus at $H{=}64$, while Precision and Recall remain essentially constant across the entire sweep. A single trained model can therefore exchange a small amount of additional inference compute for a slight improvement in fidelity, or, conversely, halve its inference cost with only a marginal increase in FID, in either case without any retraining or fine-tuning.

\section{Conclusion}

Generative models are best evaluated with FID alongside Precision and
Recall, since FID alone conflates fidelity with mode coverage and
rewards sharp but low-diversity samples. Within that evaluation frame,
we introduced the Recursive Token Mapper (RTM), a drop-in replacement
for the single-pass MLP mapping network used by style-based generators. RTM is the first method leveraging TRM for continuous image generation (\citep{baekgenerative} first applied a generative TRM to binary black-and-white pictures). 
RTM gains effective depth through recursion rather than width, preserving IMLE's one-step inference. It consistently improves FID, Precision, and Recall across nine few-shot benchmarks, CIFAR-10, and CelebA-HQ, and also improves StyleGAN2 and StyleGAN2-ADA, demonstrating that the benefit is not specific to IMLE.

\paragraph{Limitations.} Our experiments are limited to the few-shot benchmarks, CIFAR-10, CelebA-HQ, and AFHQ-v1; we do not report on ImageNet, as IMLE's per-step cost (nearest-neighbour search) scales with dataset size, making ImageNet training infeasible within our compute budget. We leave large-scale IMLE to future work.

\paragraph{Broader Impact.} Better coverage of real data distributions benefits data augmentation and scientific image synthesis, and one-step inference improves accessibility. Improved generators could facilitate disinformation or deepfakes, though our incremental architectural change on standard benchmarks at modest resolutions limits direct misuse potential relative to large-scale systems already in deployment.

\paragraph{Future work.} HRM and TRM~\citep{wang2025hrm,jolicoeurmartineau2025trm} pair their recursive core with a learned halting head that allocates more compute to hard inputs and less to easy ones. A natural next step is a halting signal compatible with IMLE, so RTM can focus on hard latents (rare modes) without hand-picking $H$ at inference.

\bibliographystyle{plainnat}
\bibliography{references}


\newpage
\appendix
\section{Recursive Token Mapper: algorithmic description}
\label{app:rtm_algorithm}

Algorithm~\ref{alg:rtm} gives the full forward pass of RTM, including the
short-gradient optimization. A single IMLE loss is computed on the final
style $w$; no supervision is applied at intermediate steps.

\begin{algorithm}[h]
\caption{Recursive Token Mapper (RTM): Noise to Style}
\label{alg:rtm}
\begin{algorithmic}[1]
\REQUIRE Noise vector $z \in \mathbb{R}^d$, refinement steps $H$, inner cycles $L$
\ENSURE Style vector $w \in \mathbb{R}^d$
\STATE $Z_0 \leftarrow \text{Reshape}(W_{\text{proj}} \cdot z + b_{\text{proj}}) \in \mathbb{R}^{s \times d_h}$ \hfill // Project noise to tokens
\STATE $Z_H \leftarrow Z_H^{\text{init}}$, \; $Z_L \leftarrow Z_L^{\text{init}}$ \hfill // Initialize carry from fixed vectors
\FOR{$h = 1$ \TO $H$}
  \FOR{$\ell = 1$ \TO $L$}
    \STATE $Z_L \leftarrow f(Z_L, \; Z_H + Z_0)$ \hfill // L-level update with noise re-injection
  \ENDFOR
  \STATE $Z_H \leftarrow f(Z_H, \; Z_L)$ \hfill // H-level update
  \IF{$h < H$}
    \STATE $Z_H \leftarrow \text{detach}(Z_H), \; Z_L \leftarrow \text{detach}(Z_L)$ \hfill // Short-gradient: detach after non-final step
  \ENDIF
\ENDFOR
\STATE $w \leftarrow W_{\text{out}} \cdot \text{Flatten}(Z_H) + b_{\text{out}}$ \hfill // Readout to style vector
\RETURN $w$ \hfill // Loss is computed on $w$ only
\end{algorithmic}
\end{algorithm}

\section{Theoretical analysis}
\label{app:theory}

We make two short observations that justify replacing the StyleGAN MLP mapper with RTM. First, RTM keeps the IMLE coverage guarantee that motivates the loss. Second, RTM's compute budget is set by the inference-time schedule, not by its parameter count.

Throughout, write the generator as $T_\theta = G_\phi \circ M_\psi$, with mapper $M_\psi: \mathcal{Z} \to \mathcal{W}$ and decoder $G_\phi: \mathcal{W} \to \mathcal{X}$. Given training data $\{x_i\}_{i=1}^n$, a distance $d$, and a latent prior $p$, IMLE~\citep{li2018imle,vashist2024rsimle} draws $m$ candidate latents $z_1, \dots, z_m \sim p$ and minimizes
\begin{equation}
  \mathcal{L}_{\mathrm{IMLE}}(\theta) \;=\; \mathbb{E}_{z_{1:m}}\!\left[\;\sum_{i=1}^{n} \min_{j \in [m]} d\!\left(x_i,\, T_\theta(z_j)\right)\right].
  \label{eq:imle-app}
\end{equation}

\begin{lemma}[Coverage is preserved]
\label{lem:coverage}
Fix the decoder $G_\phi$, a training point $x$, and a tolerance $\varepsilon > 0$. Assume there is some style $w^\star$ with $d(x, G_\phi(w^\star)) \le \varepsilon/2$ and that $G_\phi$ is locally Lipschitz at $w^\star$. Then for any continuous mapper $M_\psi$ that maps some latent $z^\star$ with $p(z^\star) > 0$ close to $w^\star$, the probability that none of $m$ candidate latents lands within $\varepsilon$ of $x$ vanishes as $m \to \infty$.
\end{lemma}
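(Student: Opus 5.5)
The argument is a short chain: pull the good region in style space back to latent space, show that region has positive prior mass, and then use independence of the $m$ candidates. I first interpret the hypotheses precisely. Local Lipschitzness of $G_\phi$ at $w^\star$ gives a radius $r_0>0$ and a constant $K$ with $d(G_\phi(w), G_\phi(w^\star)) \le K\|w-w^\star\|$ whenever $\|w-w^\star\|\le r_0$. Set $\delta = \min\{r_0, \varepsilon/(2K)\}$. I read ``$M_\psi$ maps some latent $z^\star$ close to $w^\star$'' as $\|M_\psi(z^\star)-w^\star\| < \delta$. I read ``$p(z^\star)>0$'' as: $p$ is a density that is positive on a neighbourhood of $z^\star$, or at least assigns positive mass to every open ball around $z^\star$. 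The Gaussian prior used throughout the paper satisfies this everywhere.

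\textbf{Step 1 (style to image).} For any $w$ with $\|w-w^\star\|\le\delta$, the triangle inequality for $d$ gives
\[
d(x,G_\phi(w)) \le d(x,G_\phi(w^\star)) + d(G_\phi(w^\star),G_\phi(w)) \le \varepsilon/2 + K\delta \le \varepsilon .
\]

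\textbf{Step 2 (latent to style).} Since $M_\psi$ is continuous and $\|M_\psi(z^\star)-w^\star\|<\delta$ is strict, there is $\eta>0$ such that the ball $U = B(z^\star,\eta)$ satisfies $M_\psi(U)\subset B(w^\star,\delta)$. Combining with Step 1, every $z\in U$ has $d(x,T_\theta(z))\le\varepsilon$.

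\textbf{Step 3 (positive mass).} Let $q = p(U) = \int_U p(z)\,dz$. Then $q>0$, because $p$ is positive at $z^\star$ and, for the Gaussian prior, continuous there, so it is bounded below on a smaller ball.

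\textbf{Step 4 (independent candidates).} The candidates $z_1,\dots,z_m$ are i.i.d.\ from $p$. The event that none of the $T_\theta(z_j)$ lands within $\varepsilon$ of $x$ is contained in the event that no $z_j$ falls in $U$. Hence
\[
\Pr[\text{no candidate within }\varepsilon] \le (1-q)^m \le e^{-qm} \to 0 \quad\text{as } m\to\infty .
\]
This also gives an explicit exponential rate.

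The main obstacle is not the probability calculation but making the informal hypotheses rigorous. ``Close to $w^\star$'' must be quantified as closer than the Lipschitz-admissible radius $\delta$, and this closeness must be strict so that continuity yields an open neighbourhood. ``$p(z^\star)>0$'' must mean positive mass near $z^\star$, since a pointwise density value alone gives nothing for a discontinuous density. I would state these as the working interpretation at the start of the proof. I would also note that nothing in the argument uses the recursive structure of RTM beyond continuity of $M_\psi$, which holds because RTM composes linear maps with the continuous block $f$ a finite number $H(L+1)$ of times. The detach operations do not change the forward map.
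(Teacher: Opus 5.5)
Your proof is correct and follows the paper's argument step for step. It uses the same ingredients: a Lipschitz radius $\delta$ around $w^\star$, a neighbourhood $U$ of $z^\star$ obtained from continuity of $M_\psi$, positive prior mass $q$, the bound $(1-q)^m \to 0$, and the triangle inequality. Your explicit readings of ``close to $w^\star$'' as $\|M_\psi(z^\star)-w^\star\|<\delta$ and of $p(z^\star)>0$ as positive mass near $z^\star$ are exactly the assumptions the paper's proof uses tacitly.
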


\begin{proof}
Local Lipschitz-ness of $G_\phi$ at $w^\star$ gives a $\delta > 0$ such that $\|w - w^\star\| \le \delta$ implies $d(G_\phi(w), G_\phi(w^\star)) \le \varepsilon/2$. Continuity of $M_\psi$ at $z^\star$ then provides a neighbourhood $U$ of $z^\star$ that $M_\psi$ sends into $B(w^\star, \delta)$. Because $p(z^\star) > 0$, we have $\Pr[z \in U] = q > 0$, so the chance that none of the $m$ candidates lands in $U$ is at most $(1 - q)^m \to 0$. On the complementary event, the triangle inequality gives $d(x, T_\theta(z_j)) \le \varepsilon$ for that candidate.
\end{proof}

A standard MLP mapper and an RTM are both continuous compositions of differentiable layers, so the lemma applies to both: swapping in an RTM does not weaken the coverage guarantee. The same argument carries over to RS-IMLE~\citep{vashist2024rsimle}, which only changes the prior $p$ via rejection sampling and so preserves both positive density and continuity.

The second point is purely structural. The trainable parameters of an RTM are the projection, the readout, the shared block, and (when learnable) the carry initializations. None of these scales with the schedule $(H, L)$. From Algorithm~\ref{alg:rtm}, the shared block is evaluated $H \cdot (L + 1)$ times per sample. So compute can be turned up or down at inference time without changing the parameter count, which is what makes RTM parameter-efficient.

\section{Decoder architectures}
\label{app:decoders}

The mapping network is the only component we change; the convolutional
decoder is shared with each baseline. Figure~\ref{fig:decoders} shows the
per-dataset decoder pipelines used in our RS-IMLE experiments, and the table
beneath the diagrams gives their key hyper-parameters. Each pipeline starts
from a constant feature map at $1{\times}1$, progressively upsamples
through a stack of residual blocks, and emits an RGB image through a final
$1{\times}1$ convolution. The CIFAR-10 and CelebA-HQ runs use the same
ConvNeXt-style residual blocks~\citep{liu2022convnext}; the few-shot runs use the standard
$1{\times}1$/$3{\times}3$ residual blocks with GELU activations from adapted
RS-IMLE~\citep{vashist2024rsimle} codebase. The style code $w$ from the mapper modulates every block
via Adaptive Instance Normalization~\citep{huang2017adain}, and spatial
noise injection is applied at resolutions $\le 256$.

\begin{figure}[h]
  \centering
  \resizebox{\linewidth}{!}{%
  \fbox{\hspace{2pt}%
  \begin{tikzpicture}[
      font=\small,
      >={Stealth[length=2.8mm,width=2.2mm]},
      rowtitle/.style={font=\bfseries\small, anchor=center},
      stage/.style={draw, rounded corners=2pt, minimum width=1.50cm,
                    minimum height=0.90cm, align=center, line width=0.6pt,
                    font=\footnotesize, inner sep=2pt, fill=blue!12,
                    draw=blue!60!black},
      const/.style={draw, rounded corners=2pt, minimum width=1.30cm,
                    minimum height=0.90cm, align=center, line width=0.6pt,
                    font=\footnotesize, fill=gray!18, draw=gray!55!black},
      outbox/.style={draw, rounded corners=2pt, minimum width=1.65cm,
                     minimum height=0.90cm, align=center, line width=0.6pt,
                     font=\footnotesize, fill=violet!15, draw=violet!60!black},
      arr/.style={->, line width=1.0pt, black!85},
      uplab/.style={font=\scriptsize, inner sep=1pt},
      blklab/.style={font=\scriptsize, text=black!70, inner sep=1pt},
    ]
    \def\step{2.10}

    \node[rowtitle] at (4.5*\step, 1.55) {(a) CIFAR-10 decoder};
    \node[const]  (cC)   at (1.5*\step, 0)  {const\\[-1pt]$1{\times}1$};
    \node[stage]  (c1)   at (2.5*\step, 0)  {$1{\times}1$};
    \node[stage]  (c4)   at (3.5*\step, 0)  {$4{\times}4$};
    \node[stage]  (c8)   at (4.5*\step, 0)  {$8{\times}8$};
    \node[stage]  (c16)  at (5.5*\step, 0)  {$16{\times}16$};
    \node[stage]  (c32)  at (6.5*\step, 0)  {$32{\times}32$};
    \node[outbox] (cOut) at (7.5*\step, 0)  {RGB\\[-1pt]$32{\times}32$};
    \draw[arr] (cC) -- (c1);
    \draw[arr] (c1) -- node[above,uplab]{$\uparrow 4$} (c4);
    \draw[arr] (c4) -- node[above,uplab]{$\uparrow 2$} (c8);
    \draw[arr] (c8) -- node[above,uplab]{$\uparrow 2$} (c16);
    \draw[arr] (c16) -- node[above,uplab]{$\uparrow 2$} (c32);
    \draw[arr] (c32) -- (cOut);
    \node[blklab,below=1pt of c1]  {1 blk};
    \node[blklab,below=1pt of c4]  {3 blks};
    \node[blklab,below=1pt of c8]  {6 blks};
    \node[blklab,below=1pt of c16] {6 blks};
    \node[blklab,below=1pt of c32] {6 blks};

    \begin{scope}[yshift=-3.20cm]
      \node[rowtitle] at (4.5*\step, 1.55) {(b) CelebA-HQ decoder};
      \node[const]  (eC)   at (0*\step, 0)  {const\\[-1pt]$1{\times}1$};
      \node[stage]  (e1)   at (1*\step, 0)  {$1{\times}1$};
      \node[stage]  (e4)   at (2*\step, 0)  {$4{\times}4$};
      \node[stage]  (e8)   at (3*\step, 0)  {$8{\times}8$};
      \node[stage]  (e16)  at (4*\step, 0)  {$16{\times}16$};
      \node[stage]  (e32)  at (5*\step, 0)  {$32{\times}32$};
      \node[stage]  (e64)  at (6*\step, 0)  {$64{\times}64$};
      \node[stage]  (e128) at (7*\step, 0)  {$128{\times}128$};
      \node[stage]  (e256) at (8*\step, 0)  {$256{\times}256$};
      \node[outbox] (eOut) at (9*\step, 0)  {RGB\\[-1pt]$256{\times}256$};
      \draw[arr] (eC) -- (e1);
      \foreach \a/\b in {e1/e4, e4/e8, e8/e16, e16/e32, e32/e64, e64/e128, e128/e256}
        \draw[arr] (\a) -- node[above,uplab]{$\uparrow 2$} (\b);
      \draw[arr] (e256) -- (eOut);
      \node[blklab,below=1pt of e1]   {1 blk};
      \node[blklab,below=1pt of e4]   {3 blks};
      \node[blklab,below=1pt of e8]   {5 blks};
      \node[blklab,below=1pt of e16]  {6 blks};
      \node[blklab,below=1pt of e32]  {6 blks};
      \node[blklab,below=1pt of e64]  {6 blks};
      \node[blklab,below=1pt of e128] {5 blks};
      \node[blklab,below=1pt of e256] {2 blks};
    \end{scope}

    \begin{scope}[yshift=-6.40cm]
      \node[rowtitle] at (4.5*\step, 1.55) {(c) Few-shot decoder};
      \node[const]  (fC)   at (0*\step, 0)  {const\\[-1pt]$1{\times}1$};
      \node[stage]  (f1)   at (1*\step, 0)  {$1{\times}1$};
      \node[stage]  (f4)   at (2*\step, 0)  {$4{\times}4$};
      \node[stage]  (f8)   at (3*\step, 0)  {$8{\times}8$};
      \node[stage]  (f16)  at (4*\step, 0)  {$16{\times}16$};
      \node[stage]  (f32)  at (5*\step, 0)  {$32{\times}32$};
      \node[stage]  (f64)  at (6*\step, 0)  {$64{\times}64$};
      \node[stage]  (f128) at (7*\step, 0)  {$128{\times}128$};
      \node[stage]  (f256) at (8*\step, 0)  {$256{\times}256$};
      \node[outbox] (fOut) at (9*\step, 0)  {RGB\\[-1pt]$256{\times}256$};
      \draw[arr] (fC) -- (f1);
      \foreach \a/\b in {f1/f4, f4/f8, f8/f16, f16/f32, f32/f64, f64/f128, f128/f256}
        \draw[arr] (\a) -- node[above,uplab]{$\uparrow 2$} (\b);
      \draw[arr] (f256) -- (fOut);
      \node[blklab,below=1pt of f1]   {4 blks};
      \node[blklab,below=1pt of f4]   {5 blks};
      \node[blklab,below=1pt of f8]   {5 blks};
      \node[blklab,below=1pt of f16]  {4 blks};
      \node[blklab,below=1pt of f32]  {3 blks};
      \node[blklab,below=1pt of f64]  {3 blks};
      \node[blklab,below=1pt of f128] {3 blks};
      \node[blklab,below=1pt of f256] {1 blk};
    \end{scope}
  \end{tikzpicture}\hspace{2pt}}}

  \vspace{0.6em}

  \begin{tabular}{lccc}
    \toprule
                       & \textbf{CIFAR-10} & \textbf{CelebA-HQ} & \textbf{Few-shot} \\
    \midrule
    Output resolution  & $32{\times}32$    & $256{\times}256$   & $256{\times}256$ \\
    Block type         & ConvNeXt          & ConvNeXt           & Residual         \\
    Channel width      & $768$             & $768$              & $384$            \\
    Resolution stages  & $5$               & $8$                & $8$              \\
    Total Res blocks   & $22$              & $34$               & $28$             \\
    Style conditioning & AdaIN($w$)        & AdaIN($w$)         & AdaIN($w$)       \\
    \bottomrule
  \end{tabular}

  \caption{Decoder architectures used across our RS-IMLE experiments.}
  \label{fig:decoders}
\end{figure}

\section{Few-shot image generation}
\label{sec:fewshot}

We evaluate RTM on the nine standard few-shot image-generation benchmarks used by ~\citet{vashist2024rsimle}. Each benchmark contains only a few hundred training images to test RTM under limited data. 

\paragraph{Setup.} We use the nine standard few-shot benchmarks used by RS-IMLE ~\citet{vashist2024rsimle} (Obama, Grumpy Cat, Panda, FFHQ-100, Cat, Dog,
Anime, Skulls, Shells), each containing $64$--$389$ training images at
$256{\times}256$. All RS-IMLE runs share the same decoder, optimiser, and
rejection-sampling threshold; the only thing that changes between the matched RS-IMLE baseline and the RTM rows is the mapping network. RTM uses a single configuration $(H, L){=}(8,2)$ shared across all nine datasets, with no per-dataset tuning. We compare against FastGAN~\citep{liu2021fastgan},
AdaIMLE~\citep{aghabozorgi2023adaimle}, the published RS-IMLE
numbers~\citep{vashist2024rsimle}, and our own controlled reproduction of
RS-IMLE that uses an identical pipeline to the RTM runs. The rightmost column
of Table~\ref{tab:fid} ablates the original TRM attention block in place of
MLP-Mixer for the shared block~$f$.

\begin{table}[h]
  \caption{FID on nine few-shot benchmarks ($256{\times}256$, 5{,}000 samples). \textbf{Bold}: best per row.}
  \label{tab:fid}
  \centering
  \resizebox{\textwidth}{!}{%
  \begin{tabular}{lcccccc}
    \toprule
    Dataset & FastGAN & AdaIMLE & \makecell{RS-IMLE\\(Paper)} & \makecell{RS-IMLE\\(Reprod.)} & \makecell{Ours\\(MLP Mix)} & \makecell{Ours\\(Attention)} \\
    \midrule
    Obama      & 41.1  & 25.0  & 14.0  & 17.3  & \textbf{10.19} & 12.05 \\
    Grumpy Cat & 26.6  & 19.1  & 11.5  & 13.1  & \textbf{6.70}  & 11.53 \\
    Panda      & 10.0  & 7.6   & \textbf{3.5}   & 8.0   & 6.73  & 4.70  \\
    FFHQ-100   & 54.2  & 33.2  & \textbf{12.9}  & 19.8  & 13.23 & 13.68 \\
    Cat        & 35.1  & 24.9  & 15.9  & 17.5  & \textbf{8.62}  & 14.44 \\
    Dog        & 50.7  & 43.0  & 23.1  & 47.0  & \textbf{16.09} & 19.44 \\
    Anime      & 69.8  & 65.8  & 35.8  & 24.8  & 18.04 & \textbf{17.21} \\
    Skulls     & 109.6 & 81.9  & 51.1  & 42.4  & \textbf{21.36} & 43.01 \\
    Shells     & 120.9 & 108.5 & 55.4  & 37.15 & \textbf{24.48} & 26.37 \\
    \midrule
    Average    & 57.6  & 45.4  & 24.8  & 25.2  & \textbf{13.94} & 18.05 \\
    \bottomrule
  \end{tabular}%
  }
\end{table}

\paragraph{Results.} RTM with the MLP-Mixer block roughly halves the average FID of the matched RS-IMLE reproduction. Because the only thing that changes between the two rows is the mapping network, this improvement is attributable to the mapper's recursive structure rather than to capacity, optimizer, or training schedule. The published RS-IMLE numbers retain a small edge on a couple of individual datasets, but our controlled reproduction sits well above those numbers, suggesting that part of the published gap reflects per-dataset tuning that we did not attempt; against the matched reproduction, RTM wins on average and on the majority of benchmarks.

\paragraph{Choice of block.} The attention-based RTM ablation in the rightmost
column tracks the MLP-Mixer variant within a few FID points on most datasets
but is worse on average ($18.05$ vs.\ $13.94$) and is consistently slower per
training step because of the quadratic cost of self-attention on the sequence of tokens. The MLP-Mixer block is therefore used for all of the larger CIFAR-10,
CelebA-HQ, AFHQ-v1, and StyleGAN runs in the main paper. Qualitative samples for a selection of few-shot datasets are shown in Figures~\ref{fig:fewshot_shells}--\ref{fig:fewshot_anime}; per-dataset Precision and Recall numbers are reported in
Section~\ref{app:prec-recall}.

\section{Per-dataset Precision and Recall on few-shot benchmarks}
\label{app:prec-recall}

Table~\ref{tab:prec-recall} gives Precision and Recall per dataset for all few-shot benchmarks. Both RTM variants match or exceed the baselines in precision, and the attention variant delivers the second-highest average recall (0.88), only slightly below the strong RS-IMLE reproduction baseline (0.94), but with
substantially lower FID (Table~\ref{tab:fid}). The recall drop on datasets like Skulls and Shells is expected: with only a few dozen training images, the dataset itself is too small for recall to be a meaningful metric. On larger, more varied datasets such as Dog, RTM maintains strong Recall while improving Precision.

\begin{table}[h]
  \caption{Precision and Recall on the nine few-shot benchmarks ($256 \times 256$, 1{,}000 samples). \textbf{Bold}: best per metric.}
  \label{tab:prec-recall}
  \centering
  \resizebox{\textwidth}{!}{%
  \begin{tabular}{llcccccc}
    \toprule
    Dataset & & FastGAN & AdaIMLE & \makecell{RS-IMLE\\(Paper)} & \makecell{RS-IMLE\\(Reprod.)} & \makecell{Ours\\(MLP Mix)} & \makecell{Ours\\(Attention)} \\
    \midrule
    \multirow{2}{*}{Obama}
      & Prec. & 0.92 & \textbf{0.99} & 0.98 & \textbf{0.99} & \textbf{1.00} & \textbf{1.00} \\
      & Rec.  & 0.09 & 0.68 & 0.82 & \textbf{0.97} & 0.64 & 0.83 \\
    \midrule
    \multirow{2}{*}{Grumpy Cat}
      & Prec. & 0.91 & 0.97 & 0.93 & 0.99 & \textbf{1.00} & 0.99 \\
      & Rec.  & 0.13 & 0.72 & 0.95 & \textbf{0.97} & 0.65 & 0.94 \\
    \midrule
    \multirow{2}{*}{Panda}
      & Prec. & 0.96 & 0.98 & 0.99 & 0.99 & 0.99 & \textbf{1.00} \\
      & Rec.  & 0.16 & 0.63 & \textbf{0.97} & 0.84 & 0.93 & 0.90 \\
    \midrule
    \multirow{2}{*}{FFHQ-100}
      & Prec. & 0.91 & 0.99 & \textbf{1.00} & \textbf{1.00} & \textbf{1.00} & \textbf{1.00} \\
      & Rec.  & 0.13 & 0.77 & \textbf{0.99} & \textbf{0.99} & 0.65 & 0.89 \\
    \midrule
    \multirow{2}{*}{Cat}
      & Prec. & 0.97 & 0.98 & 0.96 & 0.97 & \textbf{1.00} & 0.98 \\
      & Rec.  & 0.08 & 0.86 & 0.98 & \textbf{0.99} & 0.80 & 0.98 \\
    \midrule
    \multirow{2}{*}{Dog}
      & Prec. & 0.96 & 0.97 & 0.98 & 0.97 & \textbf{0.99} & \textbf{0.99} \\
      & Rec.  & 0.19 & 0.61 & \textbf{0.94} & 0.76 & 0.92 & 0.89 \\
    \midrule
    \multirow{2}{*}{Anime}
      & Prec. & 0.86 & 0.92 & 0.95 & 0.97 & \textbf{0.99} & 0.98 \\
      & Rec.  & 0.08 & 0.59 & 0.91 & \textbf{1.00} & 0.75 & 0.96 \\
    \midrule
    \multirow{2}{*}{Skulls}
      & Prec. & 0.78 & 0.95 & \textbf{0.99} & 0.98 & \textbf{0.99} & \textbf{0.99} \\
      & Rec.  & 0.03 & 0.32 & 0.65 & \textbf{0.98} & 0.63 & 0.68 \\
    \midrule
    \multirow{2}{*}{Shells}
      & Prec. & 0.92 & 0.97 & 0.98 & 0.99 & 0.99 & \textbf{1.00} \\
      & Rec.  & 0.03 & 0.62 & 0.59 & \textbf{0.97} & 0.66 & 0.81 \\
    \midrule
    \multirow{2}{*}{\textbf{Average}}
      & Prec. & 0.91 & 0.97 & 0.97 & 0.98 & \textbf{0.99} & \textbf{0.99} \\
      & Rec.  & 0.10 & 0.64 & 0.87 & \textbf{0.94} & 0.74 & 0.88 \\
    \bottomrule
  \end{tabular}%
  }
\end{table}

\section{Implementation details and training stability}
\label{app:training}

\paragraph{Optimization.} All RS-IMLE runs use Adam with $\beta_1 = 0.5$,
$\beta_2 = 0.999$. All
dataset-specific hyperparameters, including learning rate, batch size,
training schedule, and RTM configuration $(H, L)$, are provided in the
configuration files released with the code.

\paragraph{Short-gradient Optimization.} As described in
Section~\ref{sec:generator}, only the final refinement step is differentiated
through; all preceding steps are detached. This is the dominant memory
saving in RTM, and is what allows us to train deep configurations
($H{=}8, L{=}2$ etc.) on a single GPU.

\paragraph{Compute.} All experiments were run on NVIDIA H100 GPUs.
RS-IMLE runs (baseline and RTM) on CIFAR-10 take approximately two weeks 
on 4 H100 GPUs; on CelebA-HQ at $256{\times}256$ they take approximately 
three weeks on 4 H100 GPUs. Few-shot runs (baseline and RTM) take 
approximately 16 hours on a single H100 GPU for the smaller benchmarks 
(Shells, Skulls) and up to 2 days for the larger ones (Dog). StyleGAN2 
runs (baseline and RTM) on CIFAR-10 take approximately 24 hours on a 
single H100 GPU. StyleGAN2-ADA runs (baseline and RTM) on CIFAR-10 take 
approximately 4 days on a single H100 GPU. StyleGAN2-ADA runs (baseline 
and RTM) on AFHQ-v1 at $512{\times}512$ take approximately 4 days on 4 
H100 GPUs.

\paragraph{Dataset Licenses.}
CIFAR-10 is freely available for research and commercial use; we cite 
the original technical report~\citep{krizhevsky2009cifar}. 
CelebA-HQ~\citep{karras2018progan} is restricted to non-commercial 
research and educational use. AFHQ-v1~\citep{choi2020starganv2} is 
released under the Creative Commons Attribution-NonCommercial 4.0 
International (CC BY-NC 4.0) license. The StyleGAN2 and StyleGAN2-ADA 
codebases~\citep{karras2020stylegan2,karras2020ada} are released under 
the NVIDIA Source Code License. We use all datasets and codebases 
strictly for non-commercial research purposes

\section{StyleGAN Evaluation Protocol}
\label{app:studiogan-fid}

Table~\ref{tab:stylegan} uses the StudioGAN evaluation pipeline of~\citet{kang2021ReACGAN}, which implements Improved Precision/Recall~\citep{kynkaanniemi2019pr} and Density/Coverage~\citep{naeem2020prdc}. The CIFAR-10 ``StyleGAN2 (no RTM)'' row reports the best-FID checkpoint of the StudioGAN baseline at $170{,}000$ steps. The FID and Precision/Recall implementations used in this pipeline differ slightly in feature extractor, reference statistics, and image preprocessing from those used in Tables~\ref{tab:cifar10} and~\ref{tab:celebahq}.

\section{Inference Latency}
\label{app:timing}

Inference latency is measured as the amortized wall-clock time per generated
image on a single NVIDIA H100 GPU. We synchronize the device, time one
complete forward pass of the full generator (EMA weights, no gradient) over
a batch of noise vectors, and divide the elapsed time by the batch size.
We use batch size $64$ for CIFAR-10 and batch size $16$ for CelebA-HQ at
$256{\times}256$, reflecting memory constraints at the higher resolution.
Each reported value is the median batch time over 200 forward passes,
preceded by 20 warm-up passes to stabilize GPU cache state.

\section{Depth-only Ablation: a deeper non-recursive MLP mapper}
\label{app:depth-ablation}

Our main StyleGAN2 results in Table~\ref{tab:stylegan} replace a $2$-layer MLP mapping network with an RTM that applies the shared block $H \cdot (L+1) = 16 \cdot 2 = 32$ times per forward pass (Appendix~\ref{app:theory}). To check whether the gain comes from depth (more sequential transformations of $z$) rather than from recursion (the same parameters reused across cycles), we train two additional StyleGAN2 variants that swap the $2$-layer MLP for a non-recursive $16$- or $32$-layer MLP and are otherwise identical to the StudioGAN baseline. The $32$-layer MLP is the true depth-matched baseline: it applies the same number of sequential transformations as RTM. 

\begin{table}[h]
  \caption{Depth-only ablation on CIFAR-10 with StyleGAN2: only the mapping network changes. The RTM config $(H, L){=}(16,1)$ applies the shared block $H{\cdot}(L{+}1)=32$ times per forward pass, so the 32-layer MLP is the depth-matched non-recursive baseline. \textbf{Bold}: best per metric.}
  \label{tab:depth-ablation}
  \centering
  \small
  \begin{tabular}{lccccc}
    \toprule
    Mapping network & Mapper params & FID $\downarrow$ & IS $\uparrow$ & Prec.\ $\uparrow$ & Rec.\ $\uparrow$ \\
    \midrule
    2-layer MLP (StudioGAN baseline)            & 0.53M  & 3.88              & 10.20             & 0.734             & \textbf{0.664}   \\
    16-layer MLP (half-depth, non-recursive)    & 4.2M   & 3.73              & 10.18             & 0.751             & 0.616            \\
    32-layer MLP (depth-matched, non-recursive) & 8.4M   & 4.32              & \textbf{10.51}    & \textbf{0.769}    & 0.550            \\
    RTM, $(H,L){=}(16,1)$ \textbf{(Ours)}   & 0.66M  & \textbf{3.55}     & 10.21             & 0.740             & 0.661            \\
    \bottomrule
  \end{tabular}
\end{table}

\paragraph{Setup.} Each MLP variant takes the same StyleGAN2 backbone as the StudioGAN baseline and only deepens the mapping network from $2$ to $16$ or $32$ layers; everything else (decoder, discriminator, optimizer, training schedule) is unchanged. All rows are trained for the same $200{,}000$ steps on the same CIFAR-10 split, and we report the best-FID checkpoint of each run. RTM uses the MLP-Mixer block with $(H, L) = (16, 1)$; per the formula in Appendix~\ref{app:theory}, this gives $H \cdot (L+1) = 32$ shared-block evaluations per forward pass, matching the sequential depth of the $32$-layer MLP.

\paragraph{Observation.} Increasing non-recursive depth from $2$ to $16$ layers improves FID ($3.73$ vs.\ $3.88$), confirming that the mapping network is sensitive to sequential depth. Further deepening to $32$ layers reverses this trend ($4.32$ FID), suggesting that very deep non-recursive MLPs overfit the mapping task without the regularising effect of parameter sharing. The $32$-layer MLP achieves the highest Precision ($0.769$) and IS ($10.51$), but at a severe cost to Recall ($0.550$ vs.\ $0.664$ for the $2$-layer baseline), consistent with a high-capacity non-recursive mapper that concentrates probability mass on the high-density modes of the data distribution. RTM achieves the best FID ($3.55$) with $13\times$ fewer parameters than the depth-matched MLP ($0.66$M vs.\ $8.4$M), and maintains Recall close to the $2$-layer baseline ($0.661$ vs.\ $0.664$). The recursive structure thus provides a more effective inductive bias than raw depth: parameter sharing across cycles improves distributional fidelity without sacrificing coverage.

\section{Qualitative few-shot samples}
\label{app:fewshot_samples}

Figures~\ref{fig:fewshot_shells}--\ref{fig:fewshot_anime} show random
samples from our MLP token-mixing RTM on four few-shot benchmarks.
Figures~\ref{fig:fewshot_slerp_p1} and~\ref{fig:fewshot_slerp_p2} show
SLERP interpolations in latent space for the same four benchmarks.

\begin{figure}[!t]
  \centering
  \includegraphics[width=\linewidth]{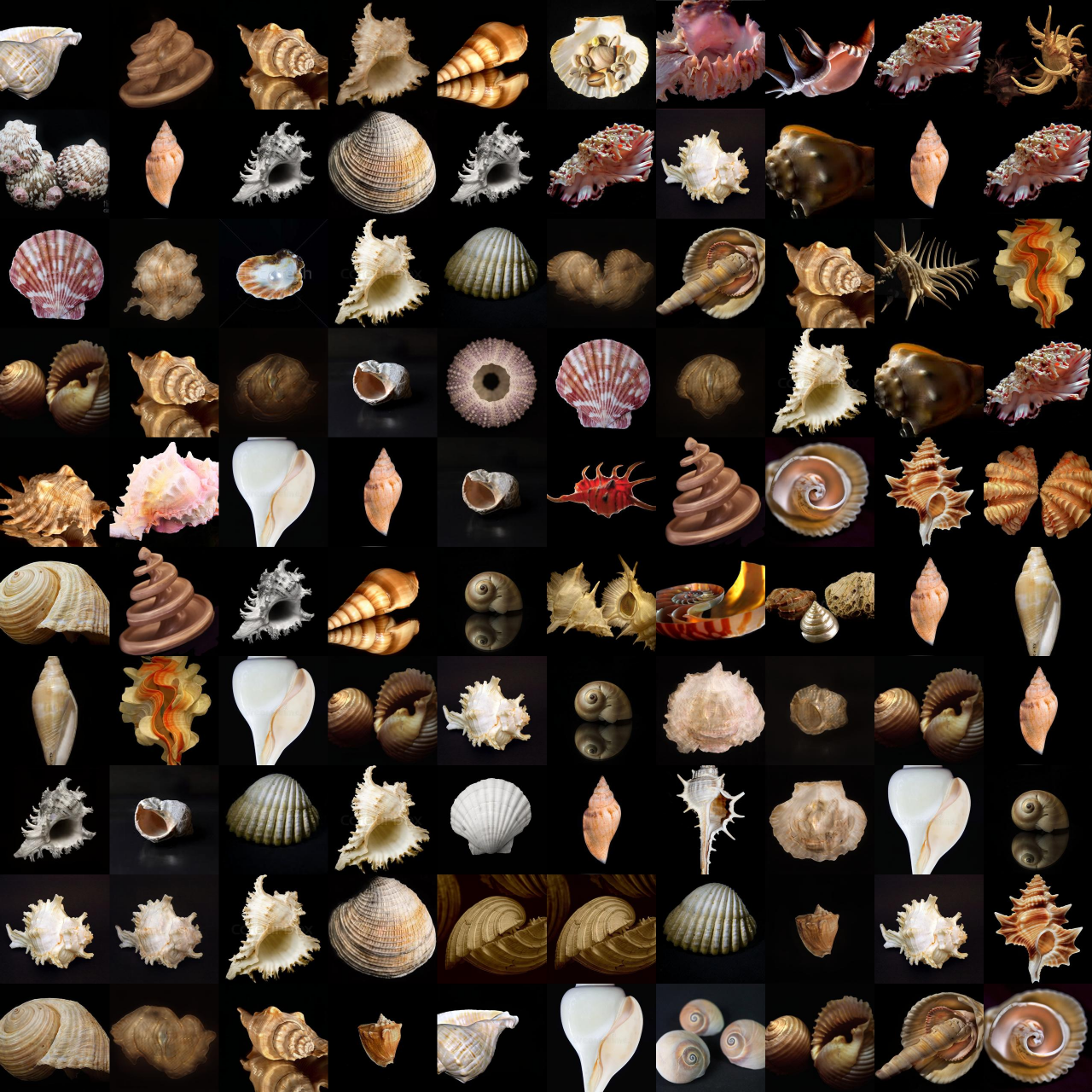}
  \caption{Random samples from RS-IMLE + RTM on Shells.}
  \label{fig:fewshot_shells}
\end{figure}
\clearpage

\begin{figure}[!t]
  \centering
  \includegraphics[width=\linewidth]{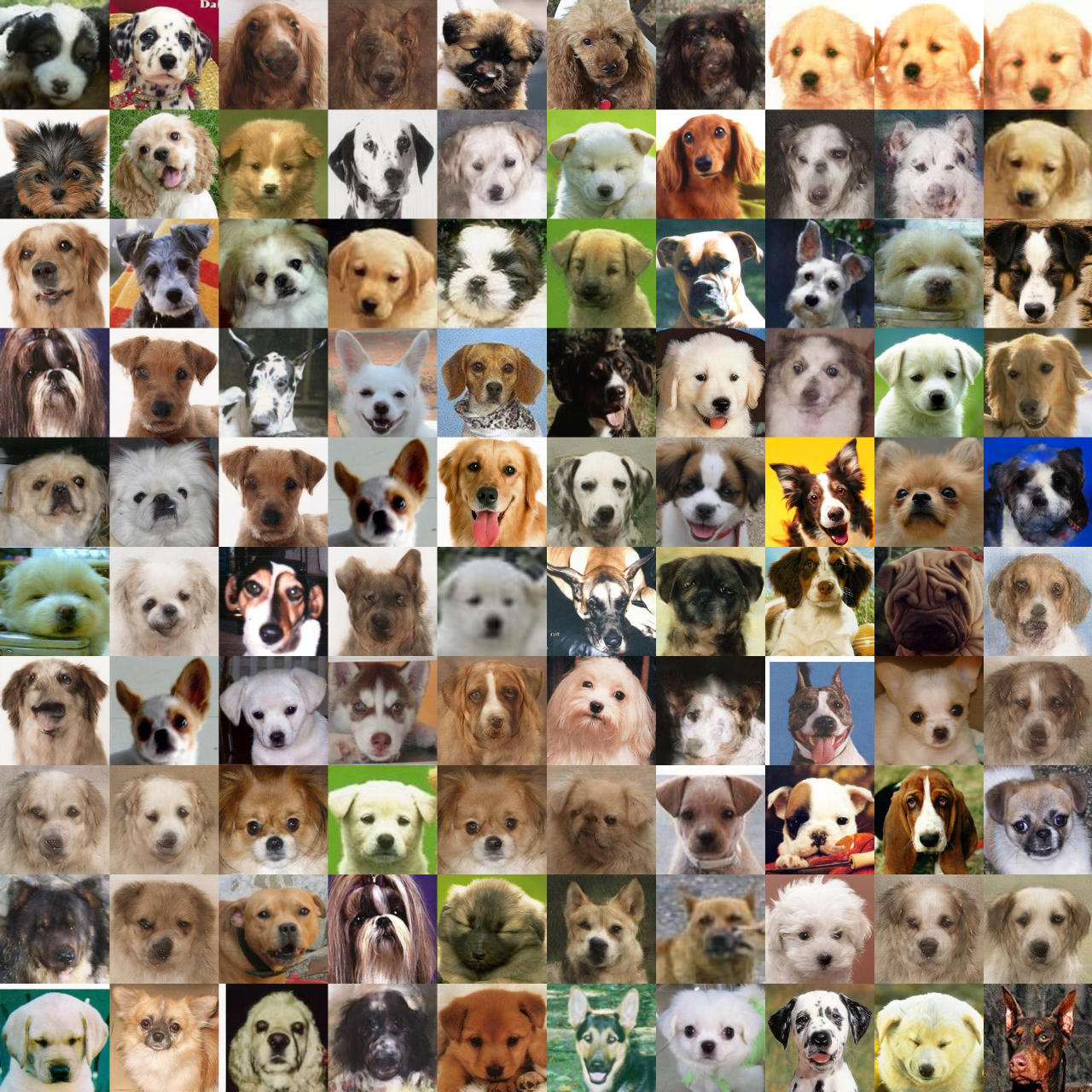}
  \caption{Random samples from RS-IMLE + RTM on Dog.}
  \label{fig:fewshot_dog}
\end{figure}
\clearpage

\begin{figure}[!t]
  \centering
  \includegraphics[width=\linewidth]{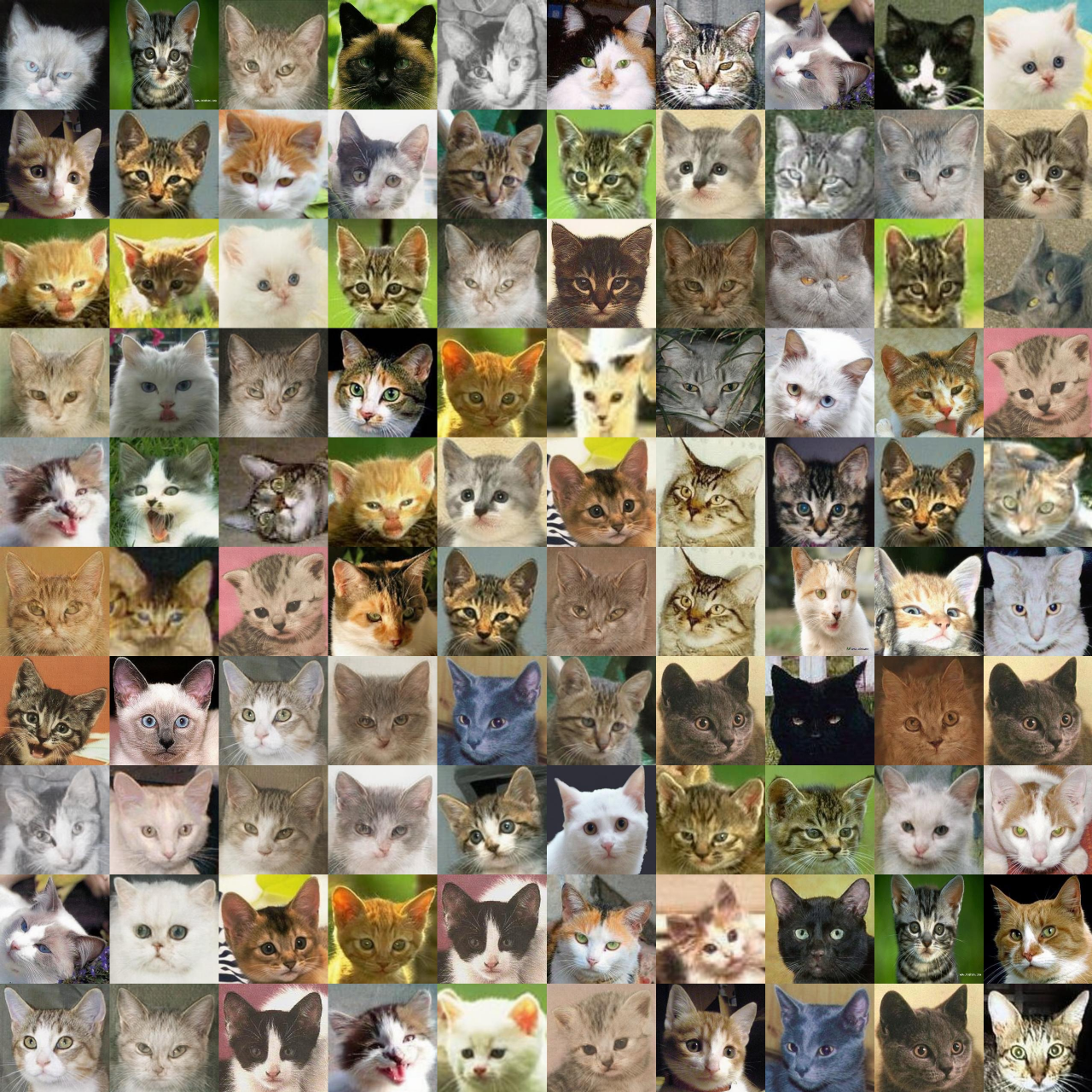}
  \caption{Random samples from RS-IMLE + RTM on Cat.}
  \label{fig:fewshot_cat}
\end{figure}
\clearpage

\begin{figure}[!t]
  \centering
  \includegraphics[width=\linewidth]{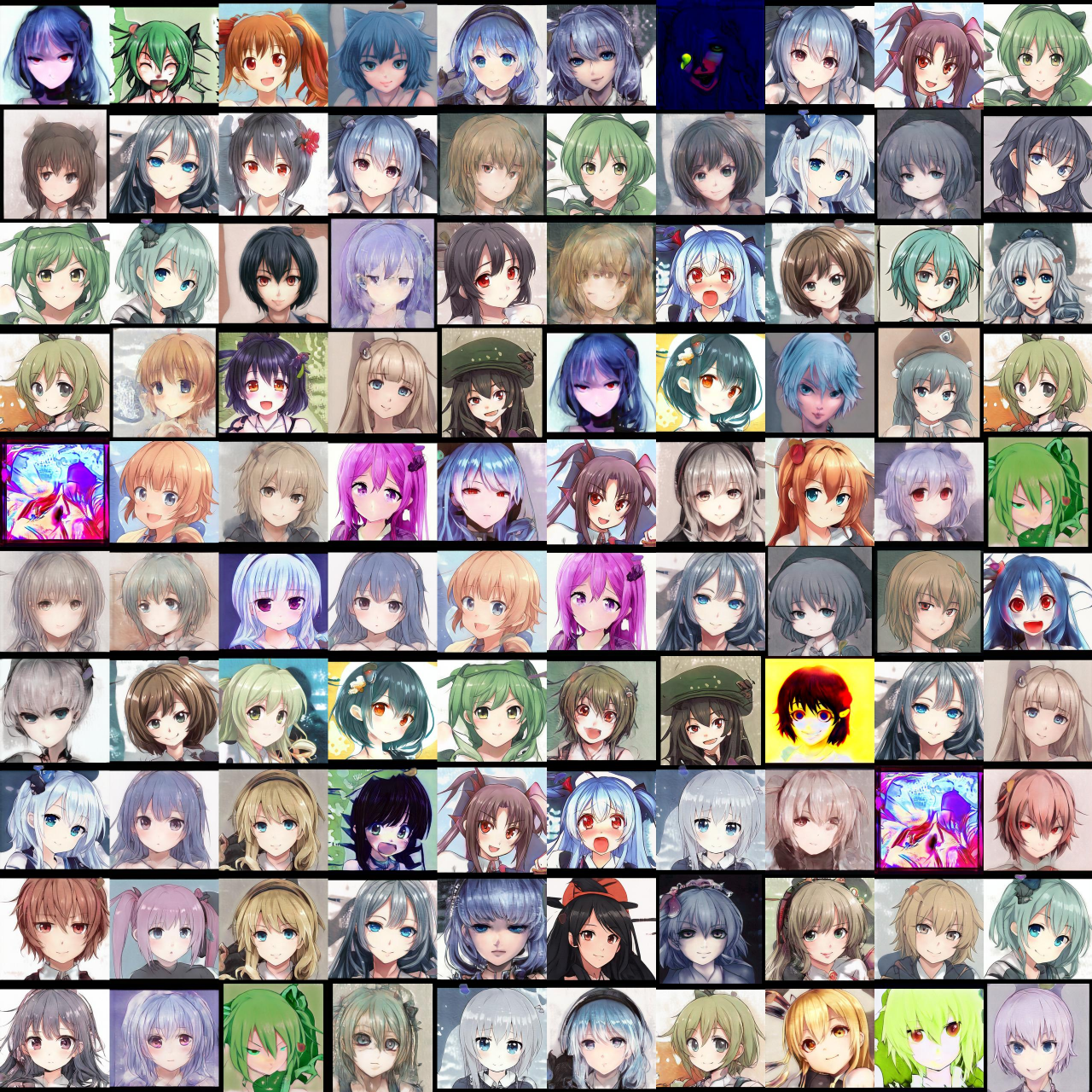}
  \caption{Random samples from RS-IMLE + RTM on Anime.}
  \label{fig:fewshot_anime}
\end{figure}
\clearpage

\begin{figure}[!t]
  \centering
  \begin{subfigure}[b]{\linewidth}
    \includegraphics[width=\linewidth]{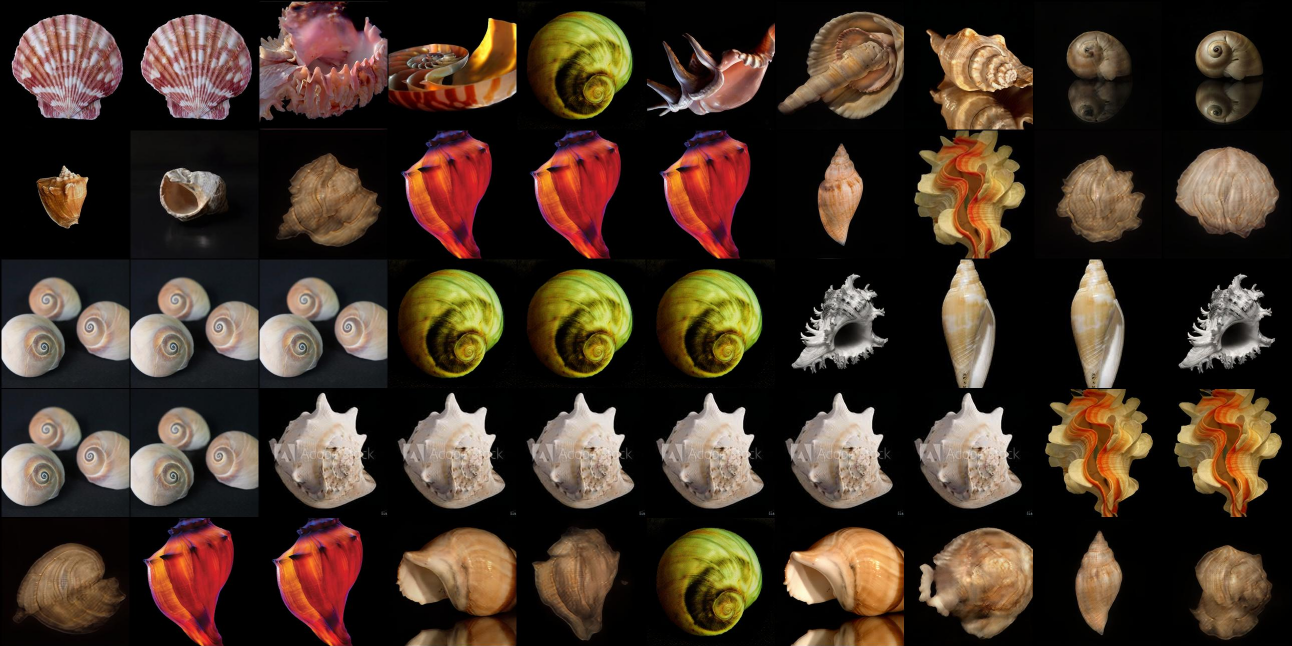}
    \caption{Shells}
  \end{subfigure}
  \\[0.8em]
  \begin{subfigure}[b]{\linewidth}
    \includegraphics[width=\linewidth]{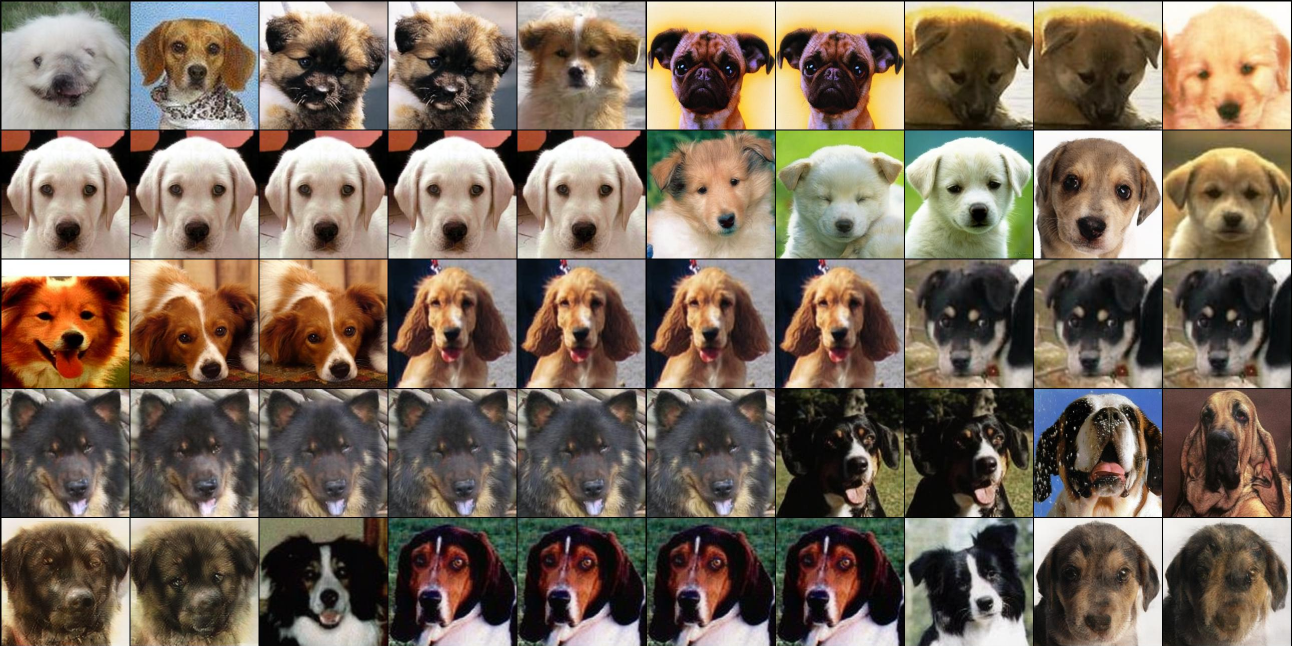}
    \caption{Dog}
  \end{subfigure}
  \caption{SLERP interpolations in latent space from RS-IMLE + RTM on Shells and Dog.}
  \label{fig:fewshot_slerp_p1}
\end{figure}
\clearpage

\begin{figure}[!t]
  \centering
  \begin{subfigure}[b]{\linewidth}
    \includegraphics[width=\linewidth]{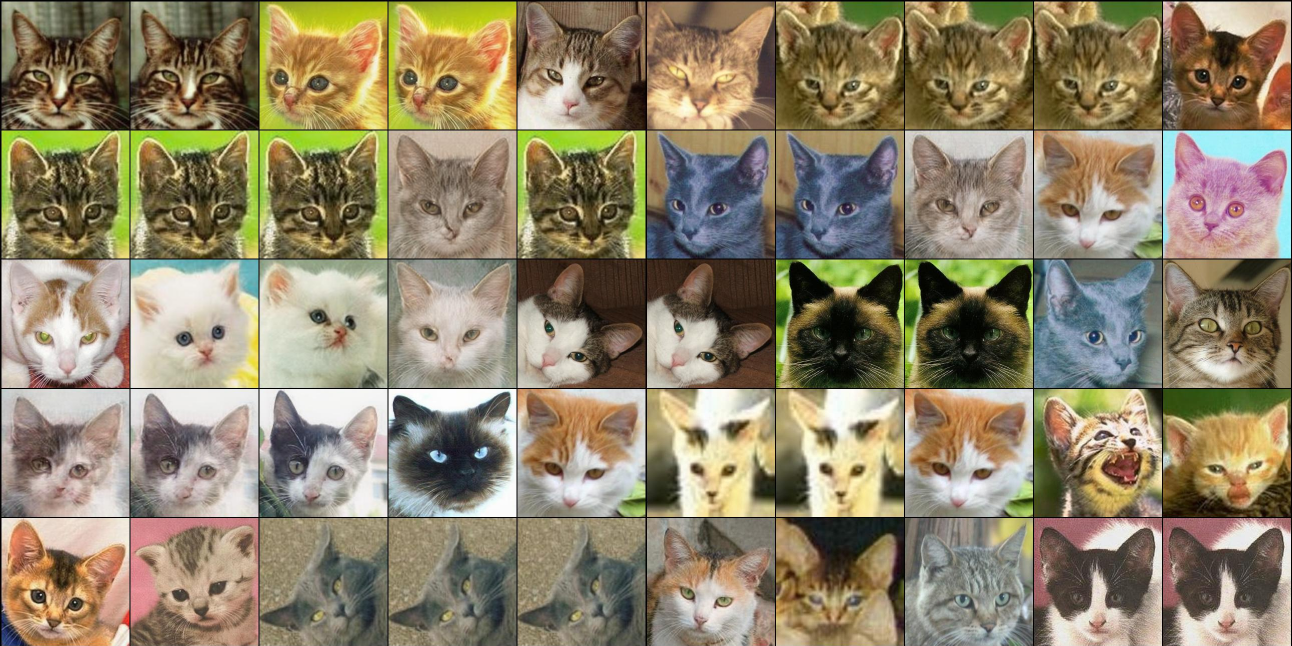}
    \caption{Cat}
  \end{subfigure}
  \\[0.8em]
  \begin{subfigure}[b]{\linewidth}
    \includegraphics[width=\linewidth]{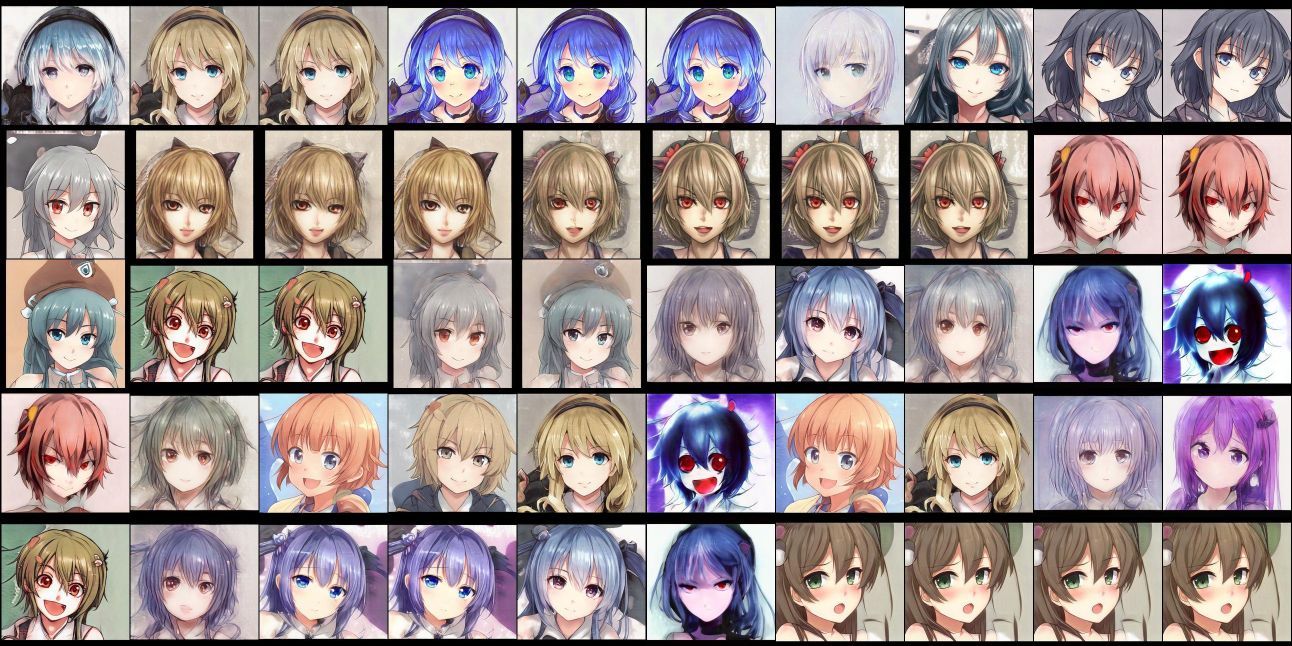}
    \caption{Anime}
  \end{subfigure}
  \caption{SLERP interpolations in latent space from RS-IMLE + RTM on Cat and Anime.}
  \label{fig:fewshot_slerp_p2}
\end{figure}
\clearpage

\section{Qualitative CIFAR-10 samples}
\label{app:cifar_samples}

Figure~\ref{fig:diversity_cifar10_rtm} shows $\sim$4{,}000 random samples from our $(H{=}16, L{=}1)$ RTM on CIFAR-10. At this density, every CIFAR-10 class is represented hundreds of times and within-class appearance varies in colour, pose, scale, and background, consistent with the high Recall reported in Table~\ref{tab:cifar10}.

\begin{figure}[!t]
  \centering
  \includegraphics[width=\linewidth,height=0.92\textheight,keepaspectratio]{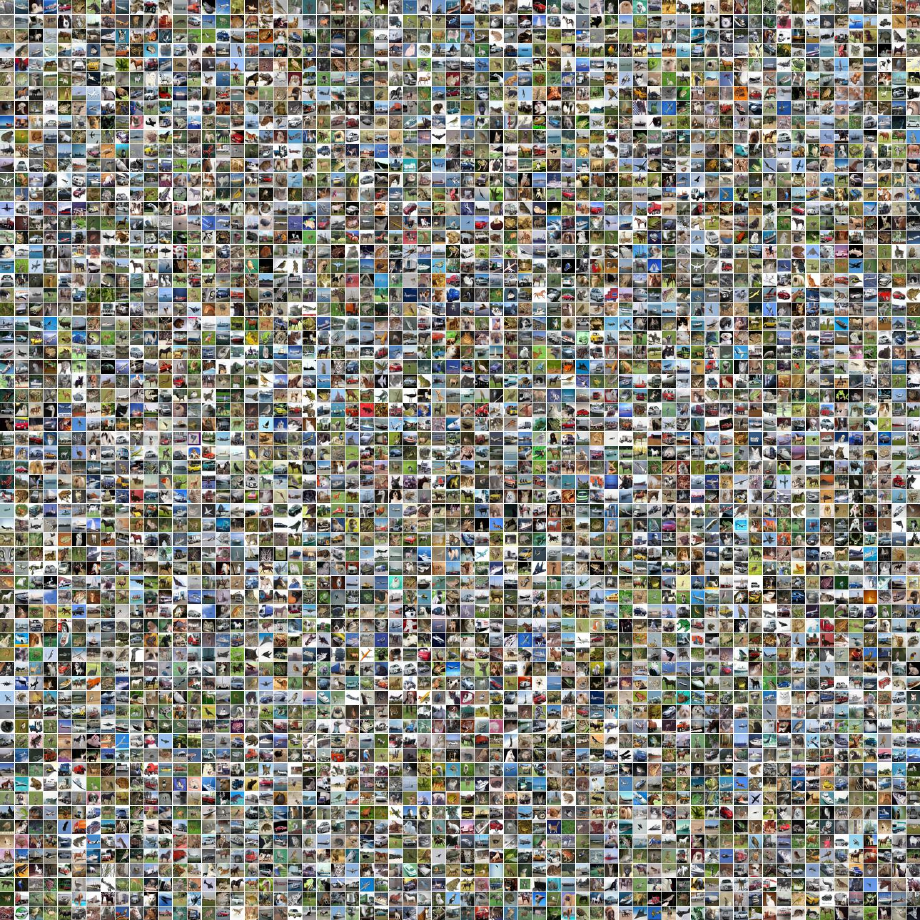}
  \caption{$\sim$4{,}000 unconditional CIFAR-10 samples from our RS-IMLE + RTM, $(H{=}16, L{=}1)$. Best viewed zoomed in.}
  \label{fig:diversity_cifar10_rtm}
\end{figure}
\clearpage

\section{Baseline-vs-RTM qualitative comparisons}
\label{app:baseline_vs_rtm}

Every block in Figures~\ref{fig:trm_nn_celeba} and~\ref{fig:trm_nn_cifar}
is anchored on a single query image taken from the real dataset and paired
with the nearest neighbours from each model's generated pool in
Inception feature space. The top row shows neighbours from the RS-IMLE
baseline, and the bottom row shows neighbours from RS-IMLE + RTM. The model
whose neighbours more faithfully reproduce the query is the better matcher.

\begin{figure}[!t]
  \centering
  \includegraphics[width=\linewidth]{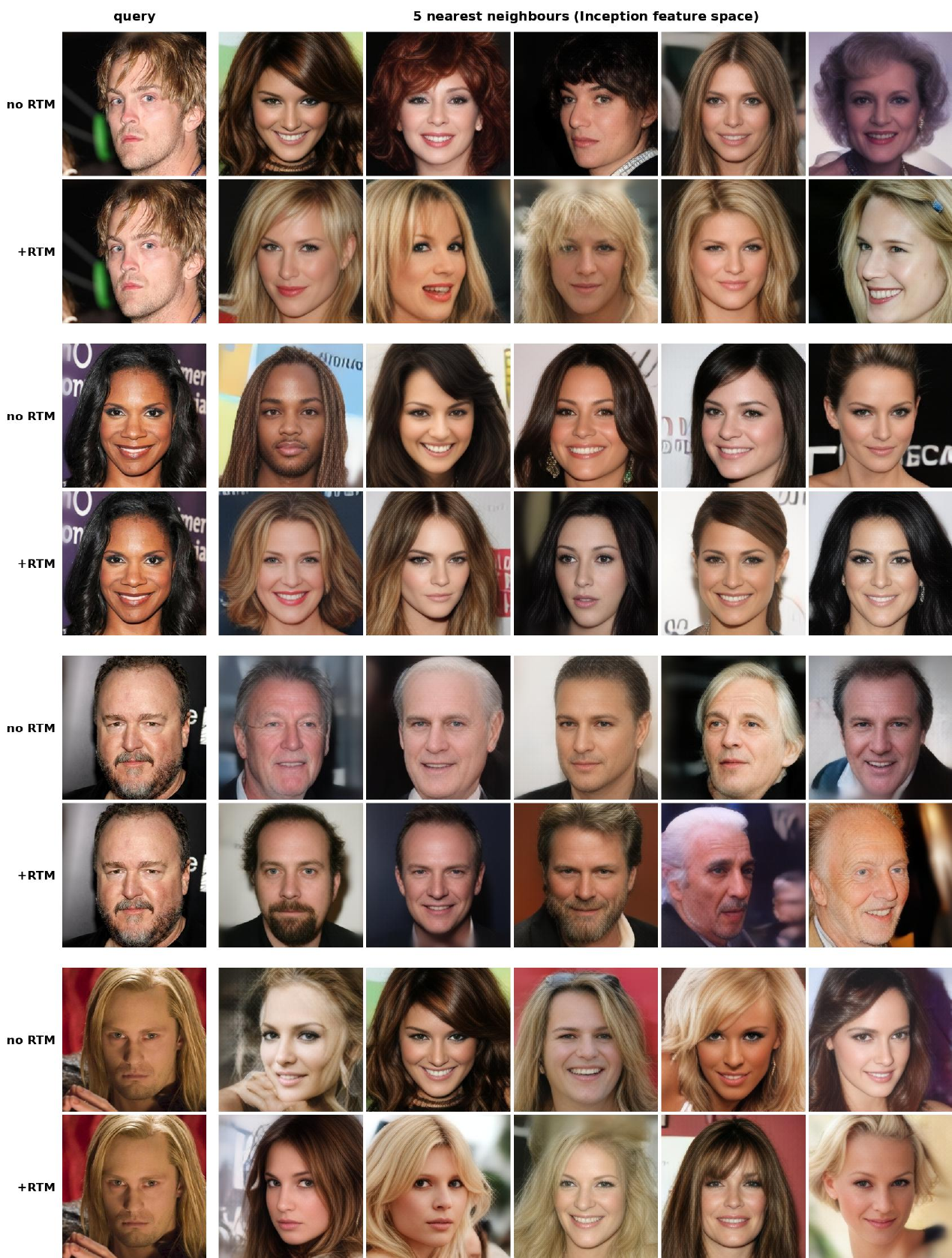}
  \caption{RS-IMLE+RTM neighbours more faithfully match the query across gender, skin tone, age, and hair attributes that the baseline often fails to preserve.}
  \label{fig:trm_nn_celeba}
\end{figure}
\clearpage

\begin{figure}[!t]
  \centering
  \includegraphics[width=\linewidth,height=0.92\textheight,keepaspectratio]{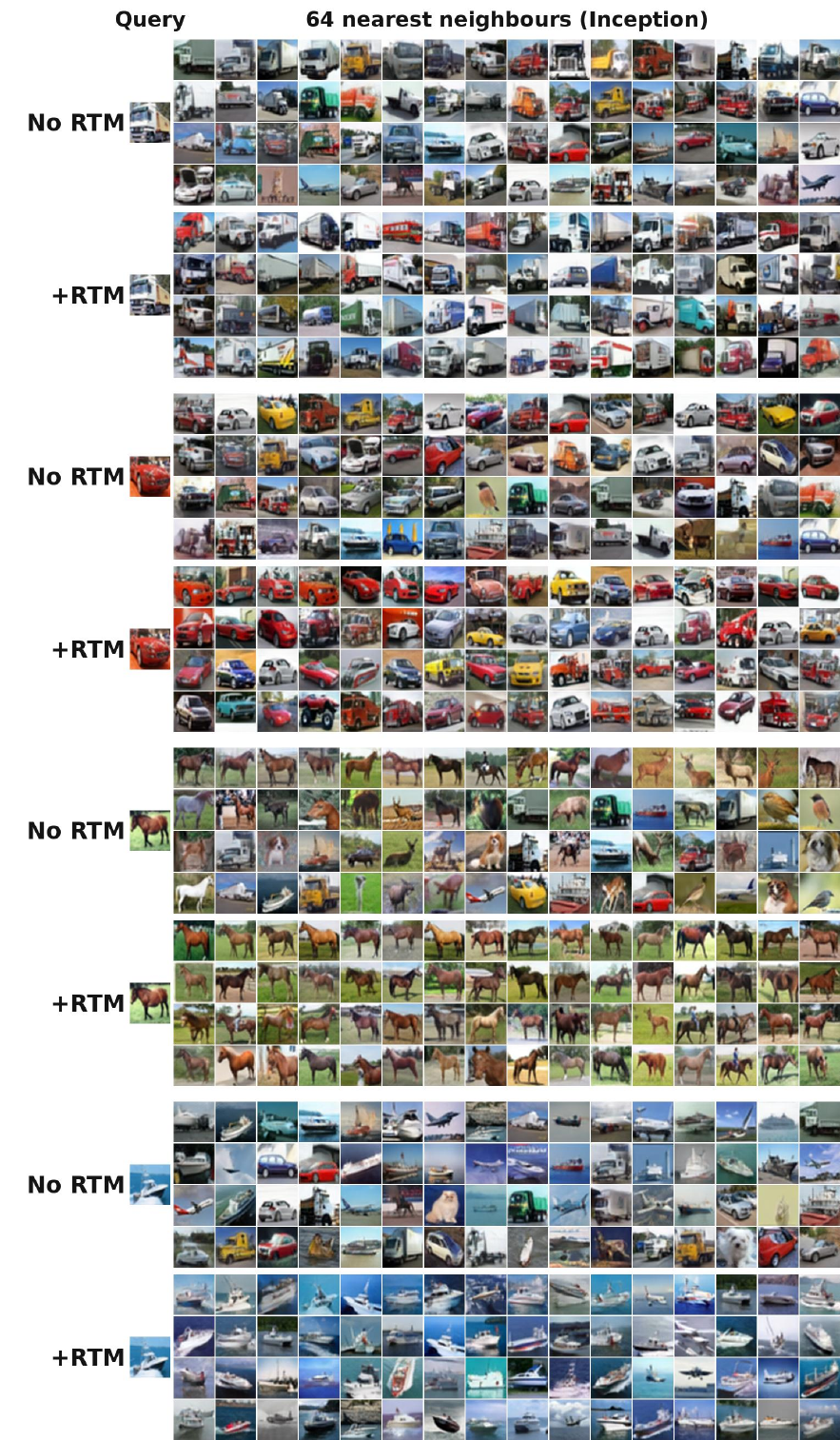}
  \caption{RS-IMLE+RTM neighbours better cluster around the query's class and visual characteristics, reflecting improved mode coverage over the baseline.}
  \label{fig:trm_nn_cifar}
\end{figure}
\clearpage

\section{CelebA-HQ baseline vs.\ RTM}
\label{app:celeba_comparison}

Figure~\ref{fig:celeba_baseline_vs_rtm} compares RS-IMLE and RS-IMLE + RTM
on CelebA-HQ at $256{\times}256$. The top panel shows matched sample pairs. The bottom panel shows a broader set of generations from each model, illustrating the gain in both sample quality and diversity consistent with
the Precision and Recall improvements in Table~\ref{tab:celebahq}.

\begin{figure}[!t]
  \centering
  \includegraphics[width=\linewidth]{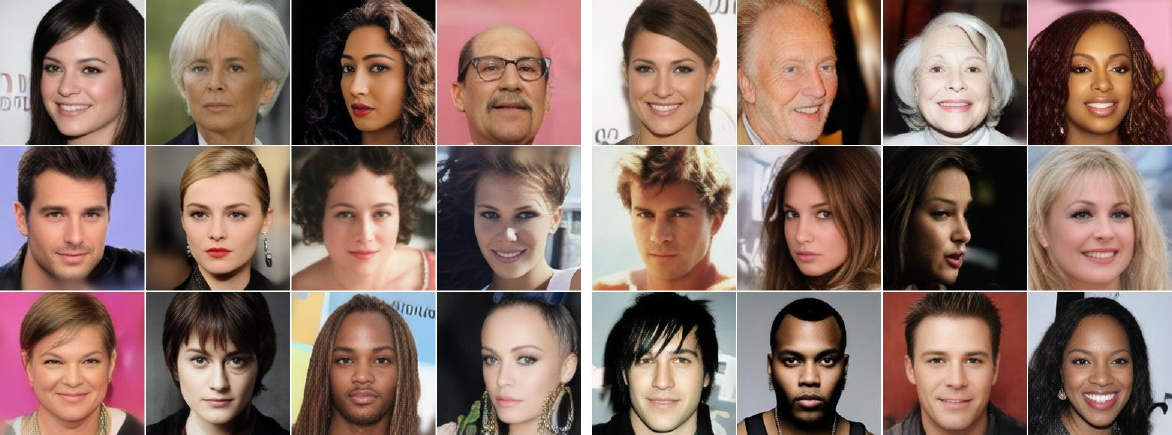}
  \\[0.8em]
  \includegraphics[width=\linewidth]{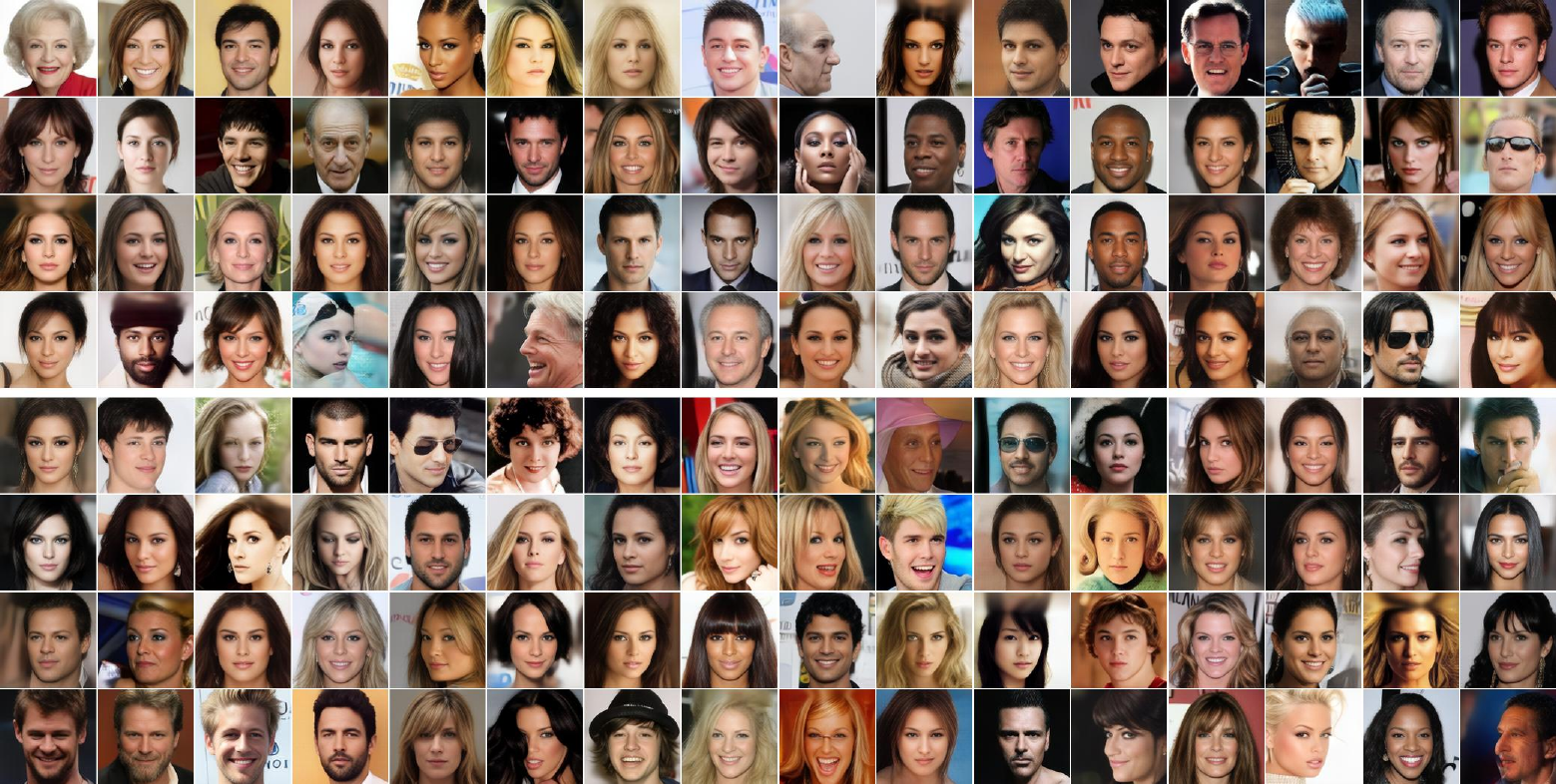}
  \caption{CelebA-HQ $256{\times}256$ comparison between RS-IMLE baseline and RS-IMLE + RTM. Top: left is without RTM, right is with RTM. Bottom: top rows are without RTM, bottom rows are with RTM. RTM generates sharper images with greater variety in age, skin tone, and expression, consistent with the improved Precision and Recall in Table~\ref{tab:celebahq}.}
  \label{fig:celeba_baseline_vs_rtm}
\end{figure}
\clearpage

\section{Additional samples}
\label{app:showcase}

Figures~\ref{fig:showcase_celeba_p1}--\ref{fig:showcase_celeba_p2} show unconditional CelebA-HQ $256{\times}256$ samples
from RS-IMLE + RTM.
Figures~\ref{fig:showcase_afhq_p1} and \ref{fig:showcase_afhq_p2} show
additional baseline-vs-RTM AFHQ-v1 comparisons for the StyleGAN2-ADA setup.

\begin{figure}[!t]
  \centering
  \includegraphics[width=\linewidth,height=0.92\textheight,keepaspectratio]{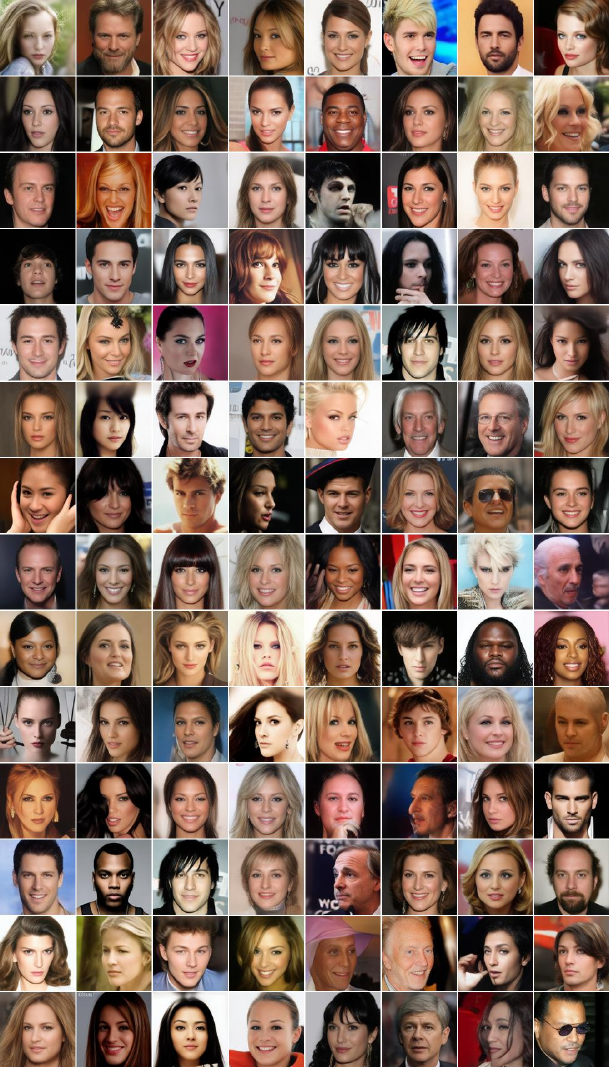}
  \caption{Unconditional CelebA-HQ $256{\times}256$ samples from RS-IMLE + RTM.}
  \label{fig:showcase_celeba_p1}
\end{figure}
\clearpage

\begin{figure}[!t]
  \centering
  \includegraphics[width=\linewidth,height=0.92\textheight,keepaspectratio]{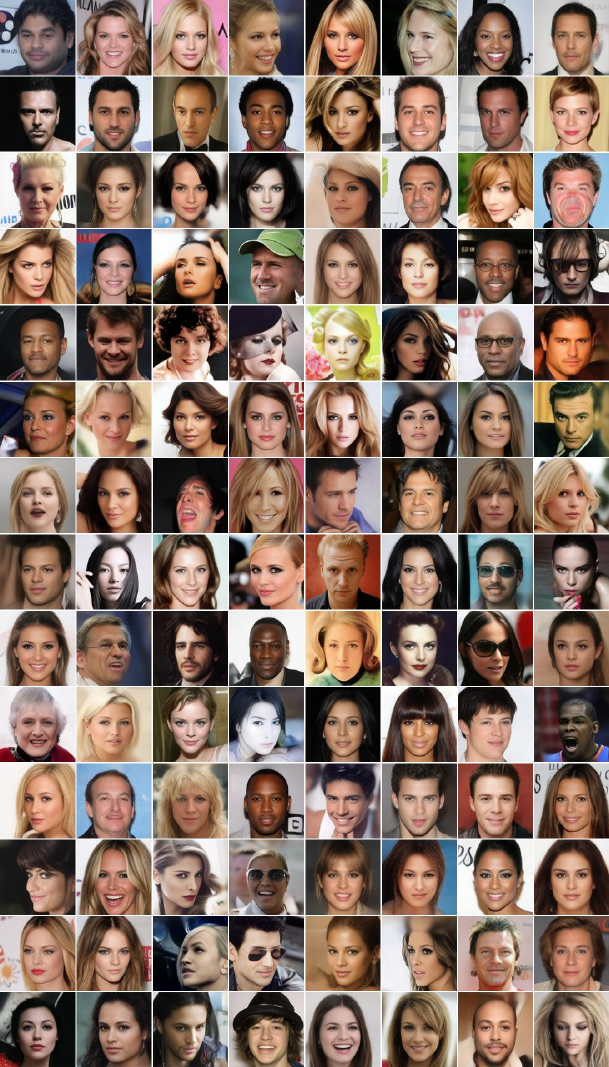}
  \caption{Unconditional CelebA-HQ $256{\times}256$ samples from RS-IMLE + RTM.}
  \label{fig:showcase_celeba_p2}
\end{figure}
\clearpage

\begin{figure}[h]
  \centering
  \includegraphics[width=0.92\linewidth]{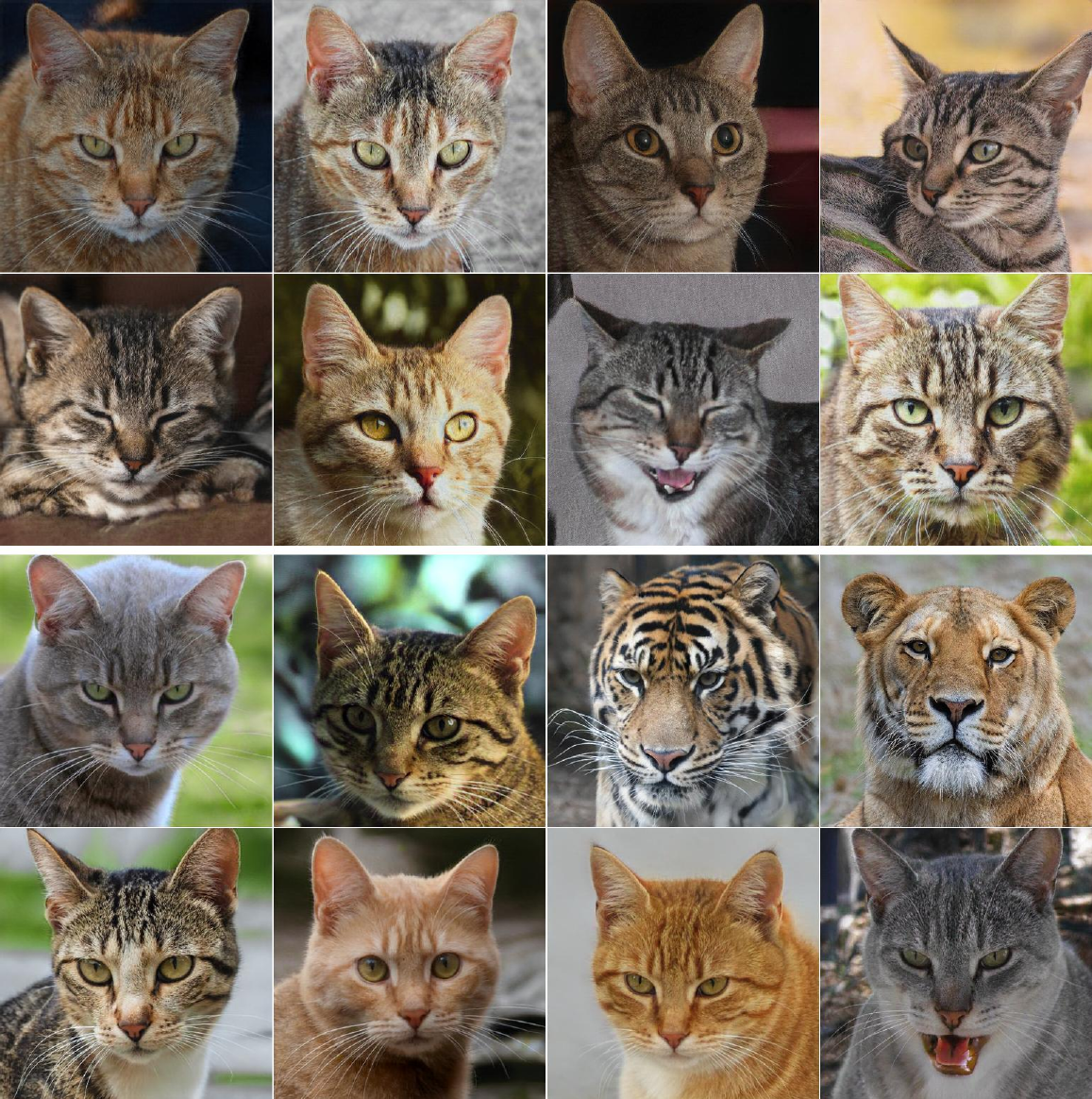}
  \caption{AFHQ-v1 with StyleGAN2-ADA. Top: baseline. Bottom: with our RTM mapper. First set of examples.}
  \label{fig:showcase_afhq_p1}
\end{figure}

\begin{figure}[h]
  \centering
  \includegraphics[width=0.92\linewidth]{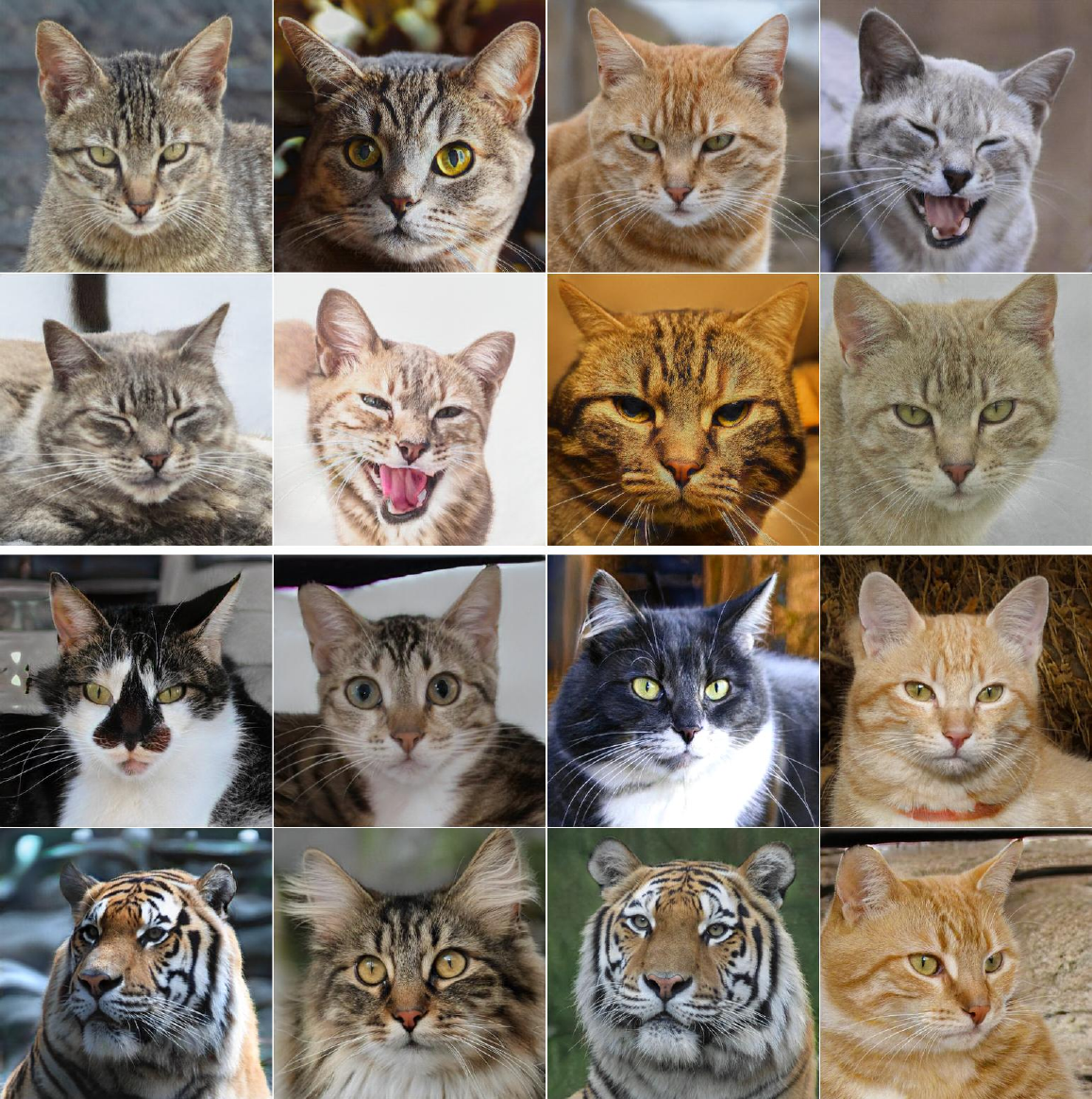}
  \caption{AFHQ-v1 with StyleGAN2-ADA. Top: baseline. Bottom: with our RTM mapper. Second set of examples.}
  \label{fig:showcase_afhq_p2}
\end{figure}

\clearpage


\end{document}